\definecolor{mygray}{gray}{.9}
\newcommand{\stdv}[1]{\scalebox{.70}{~$\pm$~#1}}
\def\ie{\emph{i.e.,}}
\theoremstyle{plain}
\newtheorem{theorem}{Theorem}[section]
\newtheorem*{theoremm}{Theorem}
\theoremstyle{definition}
\newtheorem{assumption}[theorem]{Assumption}
\theoremstyle{remark}
\newcommand{\state}{\mathbf{s}}
\newcommand{\action}{\mathbf{a}}
\newcommand{\policy}{\pi}
\newcommand{\reward}{r}
\newcommand{\rewmodel}{\widehat{\reward}_\psi}
\DeclareMathOperator*{\expec}{\mathbb{E}}
\newcommand{\KL}{D_{\mathrm{KL}}}
\icmltitlerunning{RIME: Robust Preference-based Reinforcement Learning with Noisy Preferences}
\begin{document}

\twocolumn[
\icmltitle{RIME: Robust Preference-based Reinforcement Learning with Noisy Preferences}

\icmlsetsymbol{equal}{*}

\begin{icmlauthorlist}
\icmlauthor{Jie Cheng}{1,2}
\icmlauthor{Gang Xiong}{1,2}
\icmlauthor{Xingyuan Dai}{1,2}
\icmlauthor{Qinghai Miao}{2}
\icmlauthor{Yisheng Lv}{1,2}
\icmlauthor{Fei-Yue Wang}{1,2}
\end{icmlauthorlist}

\icmlaffiliation{1}{State Key Laboratory of Multimodal Artificial Intelligence Systems, CASIA}
\icmlaffiliation{2}{School of Artificial Intelligence, the University of Chinese Academy of Sciences}

\icmlcorrespondingauthor{Yisheng Lv}{yisheng.lv@ia.ac.cn}

\icmlkeywords{preference-based reinforcement learning, human-in-the-loop reinforcement learning, deep reinforcement learning}

\vskip 0.3in
]

\printAffiliationsAndNotice{}  

\begin{abstract}
Preference-based Reinforcement Learning (PbRL) circumvents the need for reward engineering by harnessing human preferences as the reward signal. However, current PbRL methods excessively depend on high-quality feedback from domain experts, which results in a lack of robustness. In this paper, we present RIME, a robust PbRL algorithm for effective reward learning from noisy preferences. Our method utilizes a sample selection-based discriminator to dynamically filter out noise and ensure robust training. To counteract the cumulative error stemming from incorrect selection, we suggest a warm start for the reward model, which additionally bridges the performance gap during the transition from pre-training to online training in PbRL. Our experiments on robotic manipulation and locomotion tasks demonstrate that RIME significantly enhances the robustness of the state-of-the-art PbRL method. Code is available at \url{https://github.com/CJReinforce/RIME_ICML2024}.
\end{abstract}

\section{Introduction}
Reinforcement Learning (RL) has demonstrated remarkable performance in various domains, including gameplay~\cite{perolat2022mastering,kaufmann2023champion}, robotics~\cite{chen2022towards}, autonomous systems~\cite{bellemare2020autonomous,zhou2020smarts}, multimodal~\cite{yue2024sc}, etc. The success of RL frequently relies on the meticulous crafting of reward functions, a process that can be both time-consuming and susceptible to errors. In this context, Preference-Based RL (PbRL)~\cite{akrour2011preference,cheng2011preference,christiano2017deep} emerges as a valuable alternative, eliminating the requirement for manually designed reward functions. PbRL adopts a human-in-the-loop paradigm, where human teachers provide preferences over distinct agent behaviors as the reward signal. 

Nevertheless, existing works in PbRL have primarily focused on enhancing feedback-efficiency, aiming to maximize the expected return with few feedback queries. This focus induces a substantial reliance on high-quality feedback, typically assuming expertise on the teachers~\cite{liu2022meta,kim2022preference}. However, humans are prone to errors~\cite{christiano2017deep}. In broader applications, feedback is often sourced from non-expert users or crowd-sourcing platforms, where the quality can be inconsistent and noisy. Further complicating the matter, \citet{lee2021b} showed that even a mere 10\% corruption rate in preference labels can dramatically degrade the performance. Therefore, the lack of robustness to noisy preference labels hinders the wide applicability of PbRL.

Meanwhile, learning from noisy labels, also known as robust training, is a rising concern in deep learning, since such labels severely degrade the generalization performance of deep neural networks. \citet{song2022learning} classifies robust training methods into four key categories: robust architecture~\cite{cheng2020weakly}, robust regularization~\cite{xia2020robust}, robust loss design~\cite{lyu2019curriculum}, and sample selection~\cite{li2020gradient,song2021robust}. However, it is challenging to effectively incorporate these advanced methods for robust training in PbRL. This complexity arises from the pursuit of feedback-efficiency and cost reduction, necessitating access to a limited amount of feedback. Simultaneously, the distribution shift problem during RL training undermines the assumption of i.i.d input data, a core principle that supports robust training methods in deep learning.

\begin{figure*}[htbp]
\centering
\includegraphics[width=0.9\textwidth]{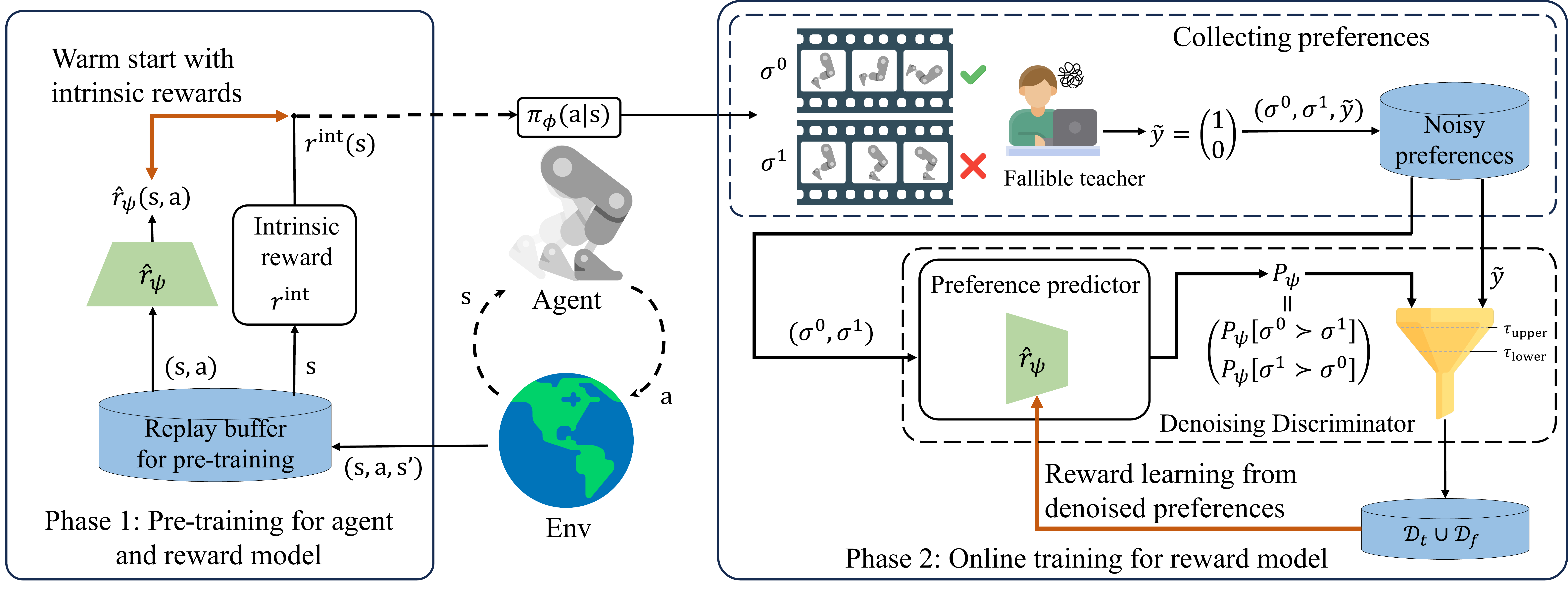}
\caption{Overview of RIME. In the pre-training phase, we warm start the reward model $\hat{r}_\psi$ with intrinsic rewards $r^\text{int}$ to facilitate a smooth transition to the online training phase. Post pre-training, the policy, Q-network, and reward model $\hat{r}_\psi$ are all inherited as initial configurations for online training. During online training, we utilize a denoising discriminator to screen denoised preferences for robust reward learning. This discriminator employs a dynamic lower bound $\tau_\text{lower}$ on the KL divergence between predicted preferences $P_\psi$ and annotated preference labels $\tilde{y}$ to filter trustworthy samples $\mathcal{D}_t$, and an upper bound $\tau_\text{upper}$ to flip highly unreliable labels $\mathcal{D}_f$.}
\label{fig:overview}
\end{figure*}

In this work, we aim to improve the robustness of preference-based RL methods on noisy and quantitatively limited preferences. To this end, we present RIME: \textbf{R}obust preference-based re\textbf{I}nforcement learning via war\textbf{M}-start d\textbf{E}noising discriminator. RIME modifies the training paradigm of the reward model in the widely-adopted two-phase (\ie pre-training and online training phases) pipeline of PbRL. Figure \ref{fig:overview} shows an overview of RIME. In particular, to empower robust learning from noisy preferences, we introduce a denoising discriminator. It utilizes dynamic lower and predefined upper bounds on the Kullback–Leibler (KL) divergence between predicted and annotated preference labels to filter samples. Further, to mitigate the accumulated error caused by incorrect filtration, we propose to warm start the reward model during the pre-training phase for a good initialization. Moreover, we find that the warm start also bridges the performance gap that occurs during the transition from pre-training to online training. Our experimental results indicate that RIME significantly outperforms existing baselines under noisy preference conditions, thereby substantially enhancing robustness for PbRL.

In summary, our work has three main contributions. First, we present RIME, a robust reward learning algorithm for PbRL, designed to effectively train reward models from noisy feedback—an important and realistic topic that has not been studied extensively. Second, we observe a dramatic performance gap during the transition from pre-training to online training in PbRL and propose to warm start the reward model for a seamless transition, which proves to be crucial for both robustness and feedback-efficiency in limited-feedback cases. Last, we show that RIME outperforms existing PbRL baselines under noisy feedback settings, across a diverse set of robotic manipulation tasks from Meta-World~\cite{yu2020meta} and locomotion tasks from the DeepMind Control Suite~\cite{tassa2018deepmind,tassa2020dm_control}, and further is more suitable for non-expert humans.

\section{Related work}
{\bf Preference-based Reinforcement Learning}.
Incorporating human feedback into the training of reward models has proven effective in various domains, including natural language processing~\citep{ouyang2022training}, multimodal \citep{lee2023aligning}, and reinforcement learning \citep{christiano2017deep, ibarz2018reward, hejna2023few}. In the context of RL, \citet{christiano2017deep} proposed a comprehensive framework for PbRL. To improve feedback-efficiency, PEBBLE \citep{lee2021pebble} used unsupervised exploration for policy pre-training. SURF \citep{park2021surf} employed data augmentation and semi-supervised learning to enrich the preference dataset. RUNE \citep{liang2021reward} encouraged exploration by modulating reward uncertainty. MRN \citep{liu2022meta} introduced a bi-level optimization to optimize the Q-function's performance. PT \citep{kim2022preference} utilized Transformer architecture to model non-Markovian rewards, proving effective in complex tasks.

Despite these advancements, the focus on feedback efficiency should not overshadow the equally critical issue of robustness in PbRL. \citet{lee2021b} indicated that a mere 10\% rate of corrupted preferences can significantly impair algorithmic performance. Furthermore, in more extensive application contexts, the collection of preferences from non-experts increases the likelihood of incorporating incorrect labels. Therefore, enhancing the robustness of PbRL remains a vital research direction. In this work, we address robust PbRL via a warm-start denoising discriminator, which dynamically filters denoised preferences and is more adaptable to cases of distribution shift during RL training.

{\bf Learning from Noisy Labels}.
Learning from noisy labels has gained more attention in supervised learning, particularly in light of the wide presence of noisy or imprecise labels in real-world applications. A variety of approaches have been proposed for robust training~\citep{song2022learning}, including architectural modifications \citep{goldberger2016training}, regularization \citep{lukasik2020does}, loss function designs \citep{zhang2018generalized}, and sample selection methods \citep{wang2021denoising}. Despite these advancements, the direct application of these methods to reward learning in PbRL has presented challenges, mainly due to the limited sample sizes and the absence of i.i.d. of samples. In the context of PbRL, \citet{xue2023reinforcement} proposed an encoder-decoder architecture to model diverse human preferences, which required approximately 100 times the amount of preference labels used in our experiments. Our approach can be situated within the sample selection category and improves robustness while preserving feedback-efficiency.

\textbf{Policy-to-Value Reincarnating RL.} 
Policy-to-value reincarnating RL (PVRL) transfers a sub-optimal teacher policy to a value-based RL student agent~\citep{agarwal2022reincarnating}. \citet{uchendu2023jump} found that a randomly initialized Q-network in PVRL leads to the teacher policy being forgotten quickly. Within the widely-adopted pipeline of PbRL, the challenge intrinsic to PVRL also arises during the transition from pre-training to online training, but has been neglected in previous research~\citep{lee2021pebble, park2021surf, liang2021reward, liu2022meta}. The issue of forgetting the pre-training policy becomes more critical under noisy feedback conditions, as detailed in Section \ref{subsec:pretrain}. Based on the observation, we propose to warm start the reward model for a seamless transition. Our ablation study demonstrates that the warm start is crucial for both robustness and feedback-efficiency.

\section{Preliminaries}
{\bf Preference-based Reinforcement Learning}.
In standard RL, an agent interacts with an environment in discrete time steps~\citep{sutton2018reinforcement}. At each time step $t$, the agent observes the current state $\state_t$ and selects an action $\action_t$ according to its policy $\policy(\cdot|\state_t)$. The environment responds by emitting a reward $\reward(\state_t, \action_t)$ and transitioning to the next state $\state_{t+1}$. The agent's objective is to learn a policy that maximizes the expected return, $\mathcal{R}_0=\sum_{t=0}^\infty\gamma^tr_t$, which is defined as a discounted cumulative sum of the reward with the discount factor $\gamma$.
 
In Preference-based RL, there is no predefined reward function. Instead, a teacher offers preferences between the agent's behaviors, and an estimated reward function $\hat{r}_\psi$ is trained to align with collected preferences. Following previous works~\citep{lee2021pebble,  liu2022meta, kim2022preference}, we consider preferences over two trajectory segments of length $H$, where segment
$\sigma = \{(\state_{1},\action_{1}), ...,(\state_{H}, \action_{H})\}$.
Given a pair of segments $(\sigma^0, \sigma^1)$,
a teacher provides a preference label $\tilde{y}$ from the set $\{(1,0), (0,1), (0.5, 0.5)\}$. The label $\tilde{y}=(1,0)$ signifies $\sigma^0\succ\sigma^1$, $\tilde{y}=(0,1)$ signifies $\sigma^1\succ\sigma^0$, and $\tilde{y}=(0.5, 0.5)$ represents an equally preferable case, where $\sigma^i\succ\sigma^j$ denotes that segment $i$ is preferred over segment $j$.
Each feedback is stored in a dataset $\mathcal{D}$ as a triple $(\sigma^0,\sigma^1,\tilde{y})$. Following the Bradley-Terry model \citep{bradley1952rank}, the preference predicted by the estimated reward function $\hat{r}_\psi$ is formulated as:
\begin{align}
  P_\psi[\sigma^i\succ\sigma^j] = \frac{\exp \left( \sum_t \hat{r}_\psi (\state^i_{t}, \action^i_{t}) \right)}{\sum_{k=i,j}\exp \left(\sum_t \hat{r}_\psi (\state^k_{t}, \action^k_{t}) \right)}
  \label{eq:pref_model}
\end{align}
The estimated reward function $\hat{r}_\psi$ is updated by minimizing the cross-entropy loss between the predicted preferences $P_\psi$ and the annotated labels $\tilde{y}$:
\begin{align}
\mathcal{L}^{\mathtt{CE}}(\psi) = \expec\Big[\mathcal{L}^{\mathtt{Reward}}\Big] = -\expec &\Big[ \tilde{y}(0)\ln P_\psi[\sigma^0\succ\sigma^1] \notag
 \\ + &\tilde{y}(1) \ln P_\psi[\sigma^1\succ\sigma^0]\Big]
\label{eq:CE loss}
\end{align}
The policy $\policy$ can subsequently be updated using any RL algorithm to maximize the expected return with respect to the estimated reward function $\hat{r}_\psi$.

{\bf Unsupervised Pre-training in PbRL}.
Pre-training the agent is important in PbRL because the initial random policy often results in uninstructive preference queries, requiring many queries for even elementary learning progress. Recent studies addressed this issue through unsupervised exploration for policy pre-training~\citep{lee2021pebble}. Specifically, agents are encouraged to traverse a more expansive state space by using an intrinsic reward derived from particle-based state entropy \citep{singh2003nearest}. Formally, the intrinsic reward is defined as \citep{liu2021behavior}:
\begin{align}
    r^{\text{int}}(\state_t)=\log(\lVert\state_t-\state_t^k\rVert)
\end{align}
where $\state_t^k$ is the $k$-th nearest neighbor of $\state_t$. This reward motivates the agent to explore a broader diversity of states. This exploration, in turn, leads to a varied set of agent behaviors, facilitating more informative preference queries.

{\bf Noisy Preferences in PbRL.}
We denote the annotated preference labels as $\tilde{y}$ and the ground-truth preference labels, typically sourced from expert human teachers or scripted teachers, as $y$. To simulate the noise in human annotations, \citet{lee2021b} introduced four noisy 0-1 labeling models: Equal, Skip, Myopic, and Mistake. The ``Mistake" model, in particular, proved to be significantly detrimental to performance across various environments. It hypothesizes that the preference dataset is contaminated with corrupted preferences whose annotated labels $\tilde{y}$ are $1-y$. Drawing on previous insights, our work starts from addressing robust reward learning under the ``Mistake" model settings. This approach is guided by empirical evidence suggesting that solutions developed to overcome complex challenges could be efficiently adapted to simpler cases.

\section{RIME}
In this section, we formally introduce RIME: \textbf{R}obust preference-based re\textbf{I}nforcement learning via war\textbf{M}-start d\textbf{E}noising discriminator. RIME consists of two main components: 1) a denoising discriminator designed to filter out corrupted preferences while accounting for training instability and distribution shifts, and 2) a warm start method to effectively initialize the reward model and enable a seamless transition from pre-training to online training. See Figure \ref{fig:overview} for the overview of RIME. The full procedure of RIME is detailed in Appendix \ref{app:algo}.

\subsection{Denoising Discriminator} \label{subsec:trust}
In the presence of noisy labels, it is well-motivated to distinguish between clean and corrupted samples for robust training. Existing research indicates that deep neural networks first learn generalizable patterns before overfitting to the noise in the data \citep{arpit2017closer,li2020gradient}. Therefore, prioritizing samples associated with smaller losses as clean ones is a well-founded approach to improve robustness. Inspired by this, a theoretical lower bound on the KL divergence between the predicted preference $P_\psi$ and the annotated preference $\tilde{y}$ for corrupted samples could be established to filter out large-loss corrupted samples.

\begin{theorem}[KL Divergence Lower Bound for Corrupted Samples]
\textit{Consider a preference dataset \(\{(\sigma^0_i,\sigma^1_i,\tilde{y}_i)\}_{i=1}^n\), where \(\tilde{y}_i\) is the annotated label for the segment pair \((\sigma^0_i,\sigma^1_i)\) with the ground truth label \(y_i\). Let \( x_i \) denote the tuple \( (\sigma^0_i, \sigma^1_i) \). Assume the cross-entropy loss \(\mathcal{L}^{\mathtt{CE}}\) for clean data (whose $\tilde{y}_i=y_i$) is bounded by \(\rho\). Then, the KL divergence between the predicted preference \(P_\psi(x)\) and the annotated label \(\tilde{y}(x)\) for a corrupted sample \(x\) is lower-bounded as:}
\begin{equation}
    \KL \left(\tilde{y}(x) \Vert P_\psi(x) \right) \geq -\ln \rho + \frac{\rho}{2} + \mathcal{O}(\rho^2)
\end{equation}
\vspace{-10pt}
\label{the:KL}
\end{theorem}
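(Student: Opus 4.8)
The plan is to reduce the KL divergence to a single logarithmic term using that corrupted labels are one-hot, then upper-bound the relevant predicted probability via the clean-data loss hypothesis, and finish with a Taylor expansion about $\rho=0$.

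First I would observe that under the ``Mistake'' noise model a corrupted sample satisfies $\tilde{y}(x) = 1 - y(x)$, and since $y(x) = (0.5,0.5)$ gives $1 - y(x) = y(x)$ (no corruption), every corrupted sample carries a one-hot ground-truth label and hence a one-hot annotated label. Taking $y(x) = (0,1)$ and $\tilde{y}(x) = (1,0)$ without loss of generality, the divergence collapses to $\KL(\tilde{y}(x)\,\Vert\,P_\psi(x)) = -\ln P_\psi[\sigma^0 \succ \sigma^1]$, so the task becomes upper-bounding $P_\psi[\sigma^0\succ\sigma^1]$.

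Next I would invoke the hypothesis that the reward model fits clean data with cross-entropy loss at most $\rho$. Scoring the corrupted sample against its \emph{ground-truth} label $y(x)$ --- justified because, with most of the data clean, the learned reward orders segments by the true preference, so this point behaves like a clean one --- gives $-\ln P_\psi[\sigma^1\succ\sigma^0] \le \rho$, i.e.\ $P_\psi[\sigma^1\succ\sigma^0] \ge e^{-\rho}$. Because the two Bradley--Terry probabilities sum to one, $P_\psi[\sigma^0\succ\sigma^1] \le 1 - e^{-\rho}$, and therefore $\KL(\tilde{y}(x)\,\Vert\,P_\psi(x)) \ge -\ln(1 - e^{-\rho})$. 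Finally I would expand about $\rho = 0$: writing $1 - e^{-\rho} = \rho\bigl(1 - \rho/2 + \mathcal{O}(\rho^2)\bigr)$ and using $-\ln(1-u) = u + \mathcal{O}(u^2)$ on the correction factor yields $-\ln(1 - e^{-\rho}) = -\ln\rho + \rho/2 + \mathcal{O}(\rho^2)$, which is the claimed bound.

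The hard part is the second step --- arguing that the clean-data loss bound $\rho$ also controls $P_\psi$ on a corrupted sample when that sample is scored against its hidden ground-truth label. This is the single substantive modeling assumption, and it is also why the bound is informative only once the reward model is warm-started and trained enough that $\rho < 1$ (so that $-\ln\rho > 0$ and the lower bound is non-vacuous); everything else is a one-line algebraic manipulation plus a standard series expansion.
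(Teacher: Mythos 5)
Your proof is correct and follows essentially the same route as the paper's: bound $P_\psi$ against the ground-truth label via the clean-data loss hypothesis to get $P_\psi[\sigma^0\succ\sigma^1]\le 1-e^{-\rho}$, reduce the KL for the one-hot corrupted label to $-\ln P_\psi[\sigma^0\succ\sigma^1]\ge -\ln(1-e^{-\rho})$, and Taylor-expand at $\rho=0$. The only differences are minor: the paper additionally works out the equal-preference case $y=(0.5,0.5)$ via a quadratic inequality (a case which, as you note, cannot actually be corrupted under the Mistake model), and it leaves implicit the modeling assumption you correctly flag --- that the clean-data loss bound is taken to control the prediction on a corrupted sample when scored against its hidden ground-truth label.
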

The proof of Theorem \ref{the:KL} is presented in Appendix \ref{app:proof_theorem_1}. Based on Theorem \ref{the:KL}, the lower bound on KL divergence threshold could be formulated  to filter out untrustworthy samples as \(\tau_{\text{base}} = -\ln \rho + \alpha \rho\) in practice, where \(\rho\) denotes the maximum cross-entropy loss on trustworthy samples observed during the last update, and \(\alpha\) is a tunable hyperparameter with a theoretically-determined value range in $(0, 0.5]$.

However, compared to deep learning, the shifting state distribution makes the robust training problem in RL more complicated. To add tolerance for clean samples in cases of distribution shift, we introduce an auxiliary term characterizing the uncertainty for filtration, defined as \(\tau_{\text{unc}} = \beta_t \cdot s_{\text{KL}}\), where \(\beta_t\) is a time-dependent parameter, and \(s_{\text{KL}}\) is the standard deviation of the KL divergence. Our intuition is that the inclusion of out-of-distribution data for training is likely to induce fluctuations in training loss. Therefore, the complete threshold equation is formulated as:
\begin{equation}
    \tau_{\text{lower}} = \tau_{\text{base}} + \tau_{\text{unc}} = -\ln \rho + \alpha \rho + \beta_t \cdot s_{\text{KL}}
    \label{eq:threshold}
\end{equation}
We utilize a linear decay schedule for \(\beta_t\) to initially allow greater tolerance for samples while becoming increasingly conservative over time, \ie $\beta_t=\max(\beta_{\min}, \beta_{\max}-kt)$. At each training step for the reward model, we apply the threshold in Equation (\ref{eq:threshold}) to identify trustworthy sample dataset $\mathcal{D}_t$, as described below: 
\begin{align}
\mathcal{D}_t=\{(\sigma^0,\sigma^1,\tilde{y})\,|\,\KL (\tilde{y} \Vert P_\psi(\sigma^0,\sigma^1))<\tau_{\text{lower}}\}
\label{eq:trust}
\end{align}
To ensure efficient usage of samples, we introduce a label-flipping method for the reintegration of untrustworthy samples. Specifically, we pre-define an upper bound \(\tau_{\text{upper}}\) and reverse the labels for samples exceeding this threshold:
\begin{align}
\resizebox{0.9\hsize}{!}{$
\mathcal{D}_f=\{(\sigma^0,\sigma^1,1-\tilde{y})\,|\,\KL (\tilde{y} \Vert P_\psi(\sigma^0,\sigma^1))>\tau_{\text{upper}}\}$}
\label{eq:flip}
\end{align}
Beyond improving sample utilization, the label-flipping method also bolsters the model's predictive confidence and reduces output entropy~\citep{grandvalet2004semi}. Following two filtering steps, the reward model is trained on the unified datasets $\mathcal{D}_t\cup\mathcal{D}_f$, using the loss function as follows: 
\begin{align}
    \mathcal{L}^{\mathtt{CE}} = &\expec_{(\sigma^0,\sigma^1,\tilde{y})\sim \mathcal{D}_t}\left[\mathcal{L}^{\mathtt{Reward}}(\sigma^0,\sigma^1,\tilde{y})\right] + \notag \\ &\expec_{(\sigma^0,\sigma^1,1-\tilde{y})\sim \mathcal{D}_f}\left[\mathcal{L}^{\mathtt{Reward}}(\sigma^0,\sigma^1,1-\tilde{y})\right]
    \label{eq:final loss}
\end{align}
Our denoising discriminator belongs to the category of sample selection methods for robust training. It stands out due to its use of a dynamically adjusted threshold enhanced by a term that accounts for instability and distributional shifts, thereby making it more suitable for the RL training process.

\subsection{Warm Start} \label{subsec:pretrain}
Sample selection methods usually suffer from accumulated errors due to incorrect selection, which highlights the need for good initialization.
Meanwhile, we observe a significant degradation in performance during the transition from pre-training to online training (see Figure \ref{fig:performance degradation}). This gap is clearly observable under noisy feedback settings and is fatal to robustness. It is exacerbated when following the most widely-adopted backbone, PEBBLE, which resets the Q-network and only retains the pre-trained policy after the pre-training phase. Because the Q-network is optimized with a biased reward model trained on noisy preferences to minimize the Bellman residual, this biased Q-function leads to a poor learning signal for the policy, erasing gains made during pre-training. 

Inspired by these observations, we propose to warm start the reward model to facilitate a smoother transition from pre-training to online training. Specifically, we pre-train the reward model to approximate intrinsic rewards during the pre-training phase. Because the output layer of the reward model typically uses the tanh activation function~\citep{lee2021pebble}, we firstly normalize the intrinsic reward to the range $(-1, 1)$ as follows:
\begin{align}
    r^{\text{int}}_{\text{norm}}(\state_t)=\text{clip}(\frac{r^{\text{int}}(\state_t)-\Bar{r}}{3\sigma_r},-1+\delta,1-\delta)
    \label{eq:norm intrinsic reward}
\end{align}
where $0<\delta\ll 1$. $\Bar{r}$ and $\sigma_r$ are the mean and standard deviation of the intrinsic rewards, respectively. Then the agent receives the reward $r^{\text{int}}_{\text{norm}}$ and stores each tuple $(\state_t, \action_t, r^{\text{int}}_{\text{norm}}, \state_{t+1})$ in a replay buffer, denoted as $\mathcal{D}_{\text{pretrain}}$. During the reward model update, we sample batches of $(\state_t, \action_t)$ along with all encountered states $\mathcal{S} = \{\state | \state \text{ in } \mathcal{D}_{\text{pretrain}}\}$ for nearest neighbor searches. The loss function for updating the reward model $\hat{r}_\psi$ is given by the mean squared error as:
\begin{align}    \mathcal{L}^{\mathtt{MSE}}=\expec_{(\state_t,\action_t)\sim\mathcal{D}_{\text{pretrain}}}\Big[\frac12\left( \hat{r}_\psi(\state_t,\action_t)-r^{\text{int}}_{\text{norm}}(\state_t)\right)^2\Big]
\label{eq:loss mse}
\end{align}
Thanks to warm start, both the Q-network and reward model are aligned with intrinsic rewards, allowing for the retention of all knowledge gained during pre-training (\ie policy, Q-network, and reward model) for subsequent online training. Moreover, the warm-started reward model contains more information than random initialization, enhancing the discriminator's ability initially.

\begin{figure}[H]
    \centering
    \includegraphics[width=0.23\textwidth]{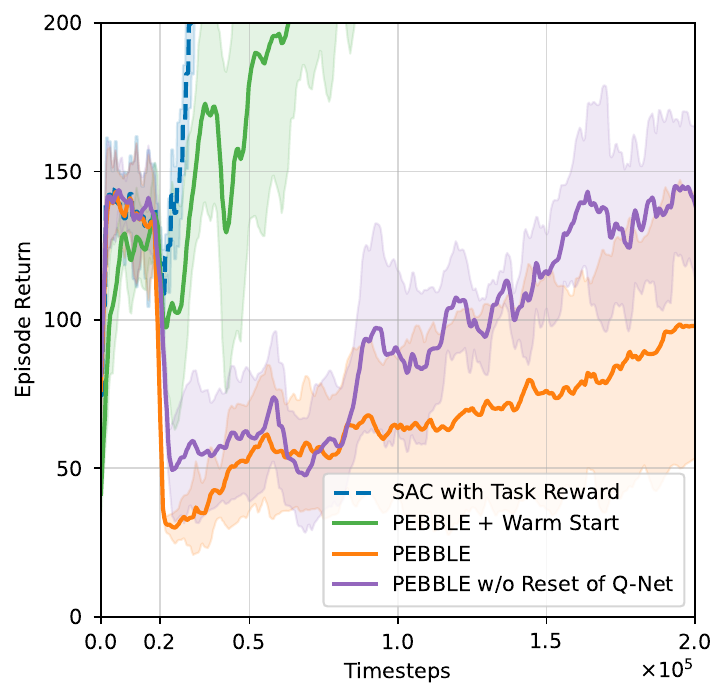}
    \includegraphics[width=0.23\textwidth]{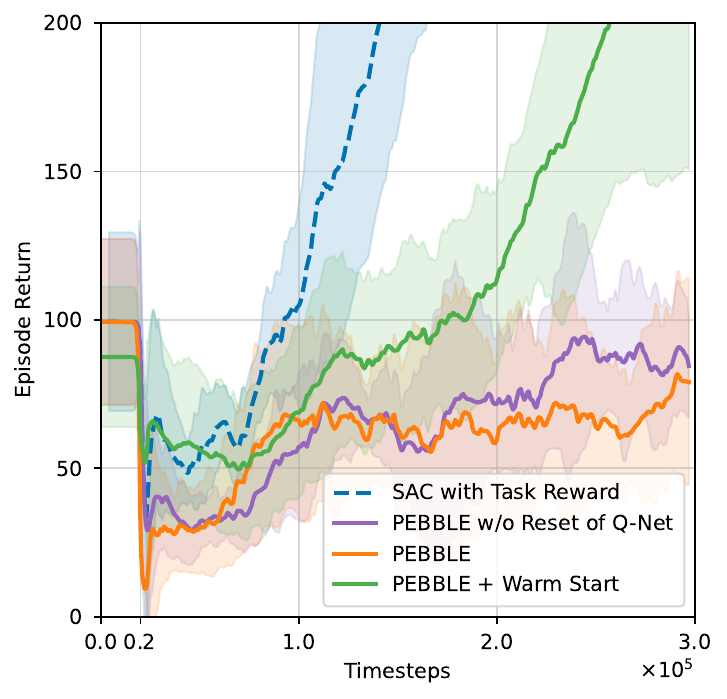}
    \caption{Performance degradation during transition on Walker-walk (left) and Quadruped-walk (right) with 30\% noisy preferences. We pre-train an agent using SAC for 20k steps. The warm start method shows a smaller transition gap and faster recovery.}
    \label{fig:performance degradation}
\end{figure}

\section{Experiments}

\begin{figure*}[t]
\centering
\captionsetup[subfloat]{captionskip=-8pt}
\includegraphics[width=0.9\linewidth]{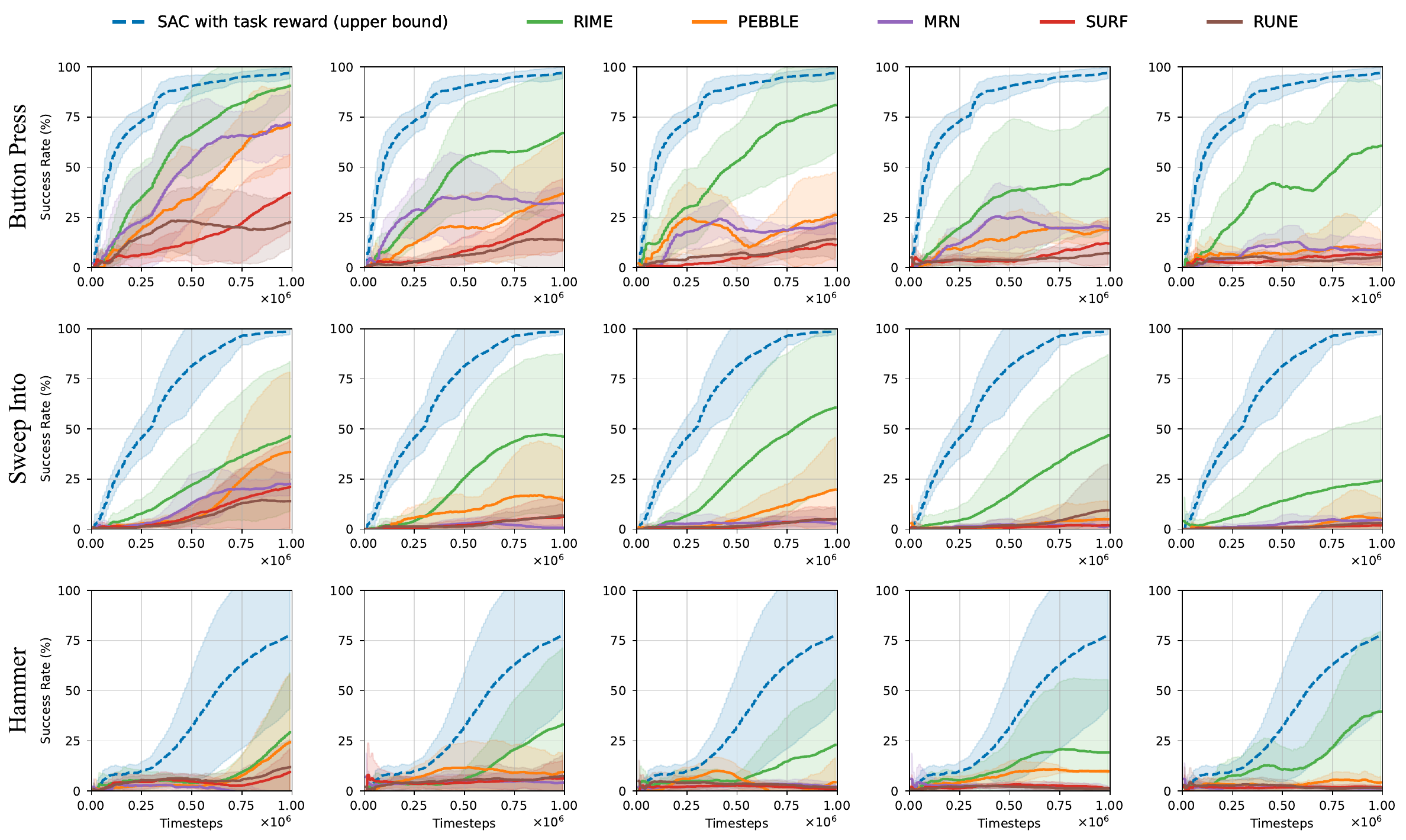}\\
\subfloat[$\epsilon=0.1$]{\hspace{0.28\linewidth}}
\subfloat[$\epsilon=0.15$]{\hspace{0.16\linewidth}}
\subfloat[$\epsilon=0.2$]{\hspace{0.16\linewidth}}
\subfloat[$\epsilon=0.25$]{\hspace{0.16\linewidth}}
\subfloat[$\epsilon=0.3$]{\hspace{0.23\linewidth}}
\caption{Learning curves for robotic manipulation tasks from Meta-world, where each row represents a specific task and each column corresponds to a different error rate $\epsilon$. 
SAC serves as a performance upper bound, using a ground-truth reward function unavailable in PbRL settings.
The corresponding number of feedback in total and per session are shown in Table \ref{table:hyperparameters_condition}. The solid line and shaded regions respectively denote the mean and standard deviation of the success rate, across ten runs.}
\label{figure:Metaworld learning curves}
\vspace{-5pt}
\end{figure*}

\begin{figure*}[t]
\centering
\captionsetup[subfloat]{captionskip=-8pt}
\includegraphics[width=0.9\linewidth]{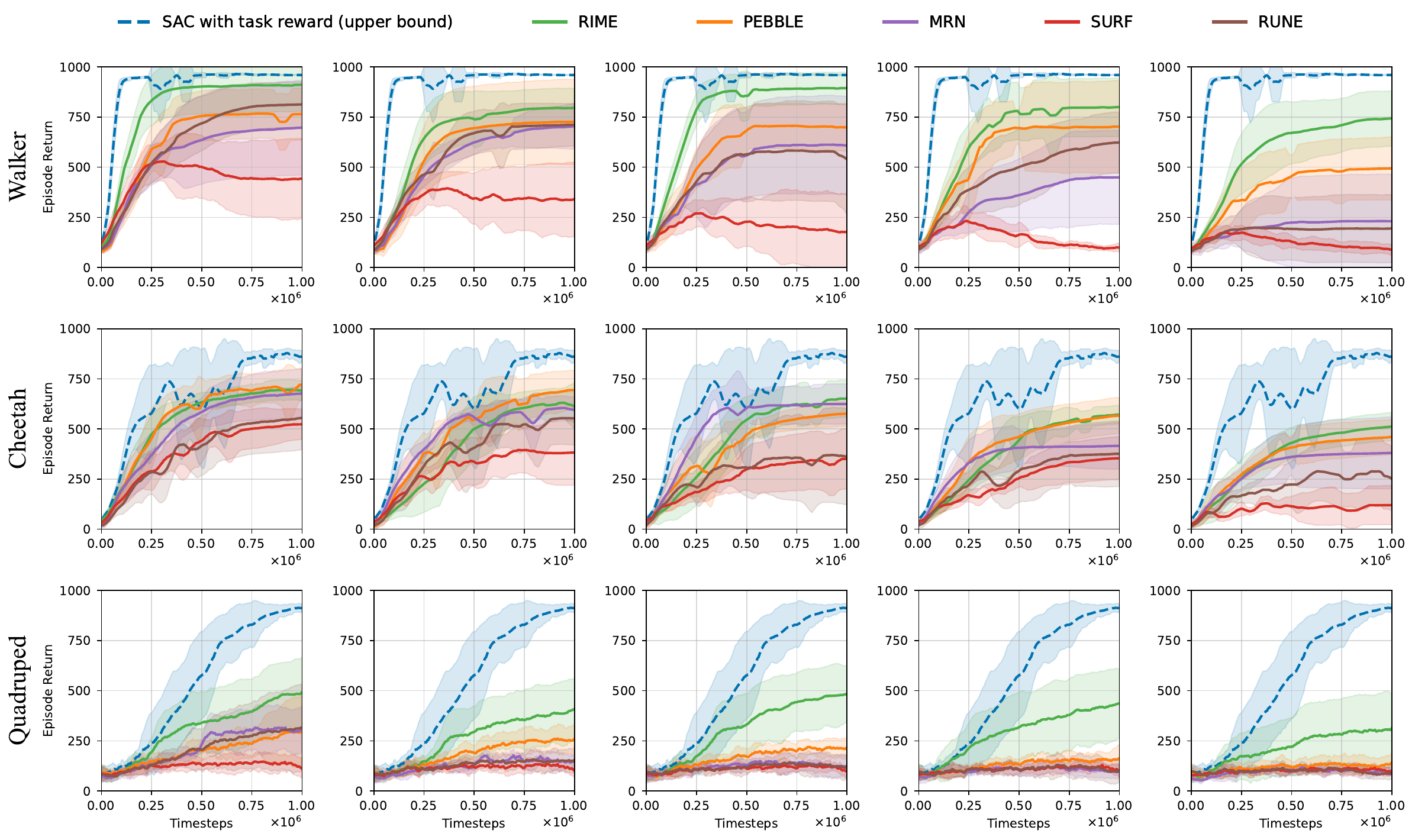}\\
\subfloat[$\epsilon=0.1$]{\hspace{0.28\linewidth}}
\subfloat[$\epsilon=0.15$]{\hspace{0.16\linewidth}}
\subfloat[$\epsilon=0.2$]{\hspace{0.16\linewidth}}
\subfloat[$\epsilon=0.25$]{\hspace{0.16\linewidth}}
\subfloat[$\epsilon=0.3$]{\hspace{0.23\linewidth}}
\caption{Learning curves on locomotion tasks from DMControl, where each row represents a specific task and each column corresponds to a different error rate $\epsilon$ setting. SAC serves as a performance upper bound, using a ground-truth reward function unavailable in PbRL settings.
The corresponding number of feedback in total and per session are shown in Table \ref{table:hyperparameters_condition}. The solid line and shaded regions respectively denote the mean and standard deviation of episode return, across ten runs.}
\label{figure:DMControl learning curves}
\end{figure*}

\subsection{Setups} \label{sec:setup}
We evaluate RIME on six complex tasks, including robotic manipulation tasks from Meta-world \citep{yu2020meta} and locomotion tasks from DMControl \citep{tassa2018deepmind, tassa2020dm_control}. The details of experimental tasks are shown in Appendix \ref{app:tasks}. Similar to prior works~\citep{lee2021b,lee2021pebble,park2021surf}, to ensure a systematic and fair evaluation, we consider a scripted teacher that provides preferences between two trajectory segments based on the sum of ground-truth reward values for each segment. To generate noisy preferences, we follow the procedure of the ``mistake" scripted teacher in \citet{lee2021b}, which flips correct preferences with a probability of $\epsilon$. We refer to $\epsilon$ as the error rate. We choose PEBBLE~\citep{lee2021pebble} as our backbone algorithm to implement RIME. In our experiments, we compare RIME against ground-truth reward-based SAC and four state-of-the-art PbRL algorithms: PEBBLE~\citep{lee2021pebble}, SURF~\citep{park2021surf}, RUNE~\citep{liang2021reward}, and MRN~\citep{liu2022meta}. Here, SAC is considered as an upper bound for performance, as it utilizes a ground-truth reward function not available in PbRL settings. We include SAC in our comparisons because it is the backbone RL algorithm of PEBBLE.

\textbf{Implementation Details.}
For the hyperparameters of RIME, we fix $\alpha=0.5$, $\beta_{\min}=1$ and $\beta_{\max}=3$ in the lower bound $\tau_\text{lower}$, and fix the upper bound $\tau_{\text{upper}}=3\ln(10)$ for all experiments. The decay rate $k$ in $\tau_{\text{upper}}$ is $1/30$  for DMControl tasks, and $1/300$ for Meta-world tasks, respectively. Other hyperparameters are kept the same as PEBBLE. For the sampling of queries, we use the disagreement sampling scheme for all PbRL algorithms, following the setting in \citet{christiano2017deep}. For the implementation of baselines, we use their corresponding publicly released repositories (see Table \ref{table:source_code} for source codes). The feedback amount in total and per query session in each environment with specified error rate are detailed in Table \ref{table:hyperparameters_condition}.

For each task, we run all algorithms independently ten times and report the average performance along with the standard deviation. Tasks from Meta-world are measured on success rate, while tasks from DMControl are measured on ground-truth episode return. More details on the algorithm implementation are provided in Appendix \ref{app:implemen_detail}. 

\begin{table}[H]
    \vspace{-10pt}
    \caption{Feedback amount in each environment with specified error rate. The ``value" column refers to the feedback amount in total / per session.}
    \centering
    \setlength{\tabcolsep}{1.0mm}
    \renewcommand{\arraystretch}{1.05}
    \resizebox{\columnwidth}{!}{
    \begin{tabular}{lll|lll}
\specialrule{.1em}{.05em}{.05em}
\textbf{Environment} & \textbf{Error rate} & \textbf{Value} & \textbf{Environment} & \textbf{Error rate} & \textbf{Value} \\
\midrule
Walker & $\epsilon<0.2$ & $500/50$ & Button Press & $\epsilon<0.2$ & $10000/50$ \\
Walker & $\epsilon\geq0.2$ & $1000/100$ & Button Press & $\epsilon\geq0.2$ & $20000/100$ \\
Cheetah & $\epsilon<0.2$ & $500/50$ & Sweep Into & $\epsilon<0.2$ & $10000/50$ \\
Cheetah & $\epsilon\geq0.2$ & $1000/100$ & Sweep Into & $\epsilon\geq0.2$ & $20000/100$ \\
Quadruped & $\epsilon<0.2$ & $2000/200$ & Hammer & $\epsilon<0.2$ & $20000/100$  \\
Quadruped & $\epsilon\geq0.2$ & $4000/400$ & Hammer & $\epsilon=0.2,0.25$ & $40000/200$ \\
 &  & & Hammer & $\epsilon=0.3$ & $80000/400$ \\
\specialrule{.1em}{.05em}{.05em}
    \end{tabular}}
    \label{table:hyperparameters_condition}
    \vspace{-10pt}
\end{table}

\subsection{Results}

\begin{table*}[tb]
    \small
    \begin{center}
    \caption{Results on tasks from Meta-world and DMControl with ``mistake" teacher. The result shows the mean and standard deviation of metric (\ie episode return for DMControl tasks and success rate for Meta-world tasks) across all five error rates within 10 runs.}
    \setlength{\tabcolsep}{2.5mm}
    \renewcommand{\arraystretch}{1.1}
    \begin{tabular}{l|ccc|ccc}
    \toprule
\multirow{2}{*}{Algorithm} & \multicolumn{3}{c|}{DMControl} & \multicolumn{3}{c}{Meta-world}     \\  \cline{2-7}
~ & Walker  & Cheetah & Quadruped & Button Press & Sweep Into & Hammer 
\\ \hline
PEBBLE & 692.05 \stdv{192.67} & \textbf{604.77} \stdv{126.63} & 208.66 \stdv{106.81} & 50.07 \stdv{29.53} & 19.09 \stdv{29.82} & 26.22 \stdv{32.08} \\
SURF & 211.28 \stdv{195.25} & 341.43 \stdv{178.10} & 125.51 \stdv{\textcolor{white}{0}40.15} & 42.60 \stdv{27.55} & 16.04 \stdv{22.86} & 11.43 \stdv{22.76} \\
RUNE & 584.06 \stdv{271.84} & 424.17 \stdv{205.16} & 152.66 \stdv{131.43} & 27.04 \stdv{18.89} & 15.02 \stdv{19.18} & 12.14 \stdv{19.30} \\
MRN & 537.40 \stdv{281.36} & 538.74 \stdv{169.63} & 139.65 \stdv{\textcolor{white}{0}88.24} & 43.48 \stdv{30.58} & 14.74 \stdv{22.89} & \textcolor{white}{0}6.35 \stdv{\textcolor{white}{0}9.55} \\
\rowcolor{mygray} RIME & \textbf{837.79} \stdv{133.49} & \textbf{602.18} \stdv{\textcolor{mygray}{0}96.10} & \textbf{415.52} \stdv{180.74} & \textbf{85.70} \stdv{22.92} & \textbf{51.96} \stdv{42.90} & \textbf{42.28} \stdv{42.31}
\\ \bottomrule
\end{tabular}
\label{table:main experiment}
\end{center}
\vspace{-5pt}
\end{table*}

For robotic manipulation tasks, we consider three tasks from Meta-world: Button-press, Sweep-into, and Hammer. For locomotion tasks, we choose three environments from DMControl: Walker-walk, Cheetah-run, and Quadruped-walk. Figure \ref{figure:Metaworld learning curves} and Figure \ref{figure:DMControl learning curves} show the learning curves of RIME and baselines on Meta-world and DMControl tasks with five error rates, respectively. Table \ref{table:main experiment} shows the mean and standard deviation of metrics across the five error rates. 

Since some preferences are corrupted, we observe that there is a gap between all PbRL methods and the best performance (\ie SAC with task reward), but RIME exceeds the PbRL baselines by a large margin in almost all environments. Especially, RIME remains effective in cases where all baselines struggle, such as Button-press with $\epsilon=0.2$, Hammer with $\epsilon=0.3$, and Walker with $\epsilon=0.3$, etc. These results demonstrate that RIME significantly improves robustness against noisy preferences. We also observe that although some feedback-efficient baselines based on PEBBLE perform comparable to or even exceed PEBBLE in low-level noise, they become ineffective as the error rate rises. Additionally, Table \ref{table:main experiment} shows that PEBBLE is a robust algorithm second only to RIME. These results reveal that the pursuit of feedback efficiency leads to over-reliance on feedback quality. 

\subsection{Ablation Study} \label{subsec:ablation study}
\begin{table*}[htb]
    \small
    \begin{center}
    \caption{Results on tasks from Meta-world and DMControl with 4 other types of (noisy) teacher. The result shows the mean and standard deviation of metric (\ie episode return for DMControl tasks and success rate for Meta-world tasks) averaged over 5 runs.}
    \setlength{\tabcolsep}{2.0mm}
    \renewcommand{\arraystretch}{1.1}
\begin{tabular}{l|c|l|cccc|c}
\toprule
Domain & Environment & Algorithm & Oracle & Equal & Skip & Myopic & Average \\ \hline
\multirow{6}{*}{DMControl} & \multirow{3}{*}{Walker} & PEBBLE & 877.44 \stdv{44.06}  & 930.90 \stdv{17.77}  & 904.31 \stdv{26.59}  & 762.53 \stdv{165.98} & 868.80  \\
                             & & MRN       & 913.66 \stdv{51.84}  & 942.80 \stdv{14.14}  & 919.61 \stdv{48.87}  & 882.34 \stdv{\textcolor{white}{0}19.68}  & 914.60  \\
                             
                             & & \cellcolor{mygray}RIME      & \cellcolor{mygray}958.87 \stdv{\textcolor{mygray}{0}3.08}   & \cellcolor{mygray}954.89 \stdv{\textcolor{mygray}{0}1.43}  & \cellcolor{mygray}950.83 \stdv{16.44}  & \cellcolor{mygray}952.16 \stdv{\textcolor{mygray}{00}1.80}    & \cellcolor{mygray}\textbf{954.19} \\ 
                             \cline{2-8} 
                             
~ & \multirow{3}{*}{Quadruped}    & PEBBLE    & 620.35 \stdv{193.74} & 743.04 \stdv{107.30} & 776.01 \stdv{\textcolor{white}{0}65.86}  & 622.78 \stdv{200.04} & 690.55  \\
                             & & MRN       & 682.98 \stdv{182.25} & 666.56 \stdv{298.40} & 653.28 \stdv{150.78} & 525.91 \stdv{233.77} & 633.18  \\
                             
                             & & \cellcolor{mygray}RIME      & \cellcolor{mygray}678.36 \stdv{\textcolor{mygray}{0}33.02}  & \cellcolor{mygray}784.05 \stdv{\textcolor{mygray}{0}56.96} & \cellcolor{mygray}755.58 \stdv{116.24} & \cellcolor{mygray}688.44 \stdv{130.59} & \cellcolor{mygray}\textbf{726.61}  \\ 
                             \hline
                             
\multirow{6}{*}{Meta-world} & \multirow{3}{*}{Button-Press} & PEBBLE    & 100.0 \stdv{0.0}       & 100.0 \stdv{0.0}      & 100.0 \stdv{0.0}       & \textcolor{white}{0}99.8 \stdv{0.4}      & 99.95   \\
                             & & MRN       & 100.0 \stdv{0.0}       & 100.0 \stdv{0.0}      & \textcolor{white}{0}99.6 \stdv{0.5}     & 100.0 \stdv{0.0}       & 99.90   \\
                             & & \cellcolor{mygray}RIME      & \cellcolor{mygray}100.0 \stdv{0.0}       & \cellcolor{mygray}100.0 \stdv{0.0}      & \cellcolor{mygray}100.0 \stdv{0.0}       & \cellcolor{mygray}100.0 \stdv{0.0}       & \cellcolor{mygray}\textbf{100.00}  \\
                             \cline{2-8}
                             
~ & \multirow{3}{*}{Hammer}       & PEBBLE    & 37.46 \stdv{44.95}   & 53.20 \stdv{34.20}  & 55.40 \stdv{33.97}    & 48.40 \stdv{40.39}    & 48.62   \\
                             & & MRN       & 67.20 \stdv{39.92}    & 44.13 \stdv{34.86}  & 52.20 \stdv{23.22}   & 41.60 \stdv{33.97}    & 51.28   \\
                             & & \cellcolor{mygray}RIME      & \cellcolor{mygray}56.00 \stdv{27.28}   & \cellcolor{mygray}53.80 \stdv{36.17}   & \cellcolor{mygray}54.80 \stdv{34.25}    & \cellcolor{mygray}70.60 \stdv{38.95}    & \cellcolor{mygray}\textbf{58.80} \\
                             \bottomrule
\end{tabular}
\label{table:more noisy teachers}
\end{center}
\vspace{-5pt}
\end{table*}

\textbf{Performance with more types of (noisy) teachers.}
To investigate whether our method can generalize to more situations, we evaluate RIME, PEBBLE, and MRN with the other four types of teachers proposed by \citet{lee2021b}: Oracle, Skip, Equal, and Myopic. ``Oracle" teacher provides ground-truth preferences. ``Skip" teacher will skip the query if the cumulative rewards of segments are small. ``Equal" teacher will give equal preference $\tilde{y}=(0.5,0.5)$ if the difference between the cumulative rewards of two segments is small. ``Myopic" teacher focuses more on the behavior at the end of segments. More details of these four teachers are shown in Appendix \ref{app:scripted teachers}. We report mean and standard deviation across five runs in Table \ref{table:more noisy teachers}. We found that RIME not only performs the best when teachers can provide ambiguous or wrong labels (Equal and Myopic), but it is also comparable with baselines on correct labels (Oracle and Skip). Based on the superior performance of RIME with multiple teachers, it has better chances of performing well with real teachers as well~\citep{lee2021b}.

\textbf{Comparison with other robust training methods.} Since the reward learning in PbRL is posed as a classification problem, the robust training methods in Machine Learning could be migrated to compare with RIME. Consider a sample selection method: adaptive denoising training (ADT) \citep{wang2021denoising}, two robust loss functions: Mean Absolute Error (MAE)~\citep{ghosh2017robust} and t-CE~\citep{feng2021can}, and a robust regularization method: label smoothing (LS)~\citep{wei2021smooth}, as our baselines. ADT drops a-$\tau(t)$ proportion of samples with the largest cross-entropy loss at each training iteration, where $\tau(t)=\min(\gamma t,\tau_{\max})$. We set $\tau_{\max}=0.3$, $\gamma=0.003$, and $0.0003$ for tasks from DMControl and Meta-world, respectively. MAE loss if formulated as $\mathcal{L}^{\mathtt{MAE}}(\psi) = \expec [|\tilde{y} - P_\psi |]$, while t-CE loss is formulated as $\mathcal{L}^{\mathtt{t}\text{-}\mathtt{CE}}(\psi) = \expec [\sum_{i=1}^t\frac{(1-\tilde{y}^\top\cdot P_\psi)^i}{i}]$. Label smoothing method replace $\tilde{y}$ in Equation (\ref{eq:CE loss}) with $(1-r)\cdot\tilde{y}+\frac{r}{2}\cdot[1,1]^\top$. We adopt $t=4$ for t-CE loss and $r=0.1$ for label smoothing respectively. All baselines are implemented based on PEBBLE. Table \ref{table:robust training methods} shows the result on four tasks with ``Mistake" teacher and error rate as $\epsilon=0.3$. Additional experiments with fixed lower bound $\tau_{\text{lower}}$ are provided in Appendix \ref{app:addition_result}. We observe that label smoothing almost fails to handle corrupted labels in our experiments. Sample selection methods (RIME and ADT) work better compared to other types of methods, and RIME still outperforms baselines. The reason is that the dynamic threshold with tolerance for out-of-distribution data is particularly well-suited to the RL training process.

\begin{table}[ht]
    \vspace{-10pt}
    \small
    \begin{center}
    \caption{Results of different robust training methods with ``mistake" teacher and error rate as $\epsilon=0.3$.}
    \setlength{\tabcolsep}{1.0mm}
    \renewcommand{\arraystretch}{1.1}
\begin{tabular}{l|cc|cc}
\specialrule{.1em}{.05em}{.05em}
\multirow{2}{*}{Algorithm} & \multicolumn{2}{c|}{DMControl} & \multicolumn{2}{c}{Meta-world}     \\  \cline{2-5}
~ & Walker & Quadruped & Button Press & Hammer 
\\ \hline
\color{gray}{PEBBLE}    & \color{gray}{431 \stdv{157}} & \color{gray}{125 \stdv{\textcolor{white}{0}38}}  & \color{gray}{22.0 \stdv{13.8}}    & \color{gray}{\textcolor{white}{0}8.6 \stdv{\textcolor{white}{0}4.8}}   \\
\quad + ADT       & 572 \stdv{247} & \textbf{295} \stdv{194} & \textbf{74.1} \stdv{20.9} & 37.6 \stdv{26.1} \\
\quad + MAE       & 453 \stdv{295}  & 246 \stdv{\textcolor{white}{0}22}  & 71.2 \stdv{31.0}  & 17.8 \stdv{26.9} \\
\quad + t-CE      & 548 \stdv{240} & 234 \stdv{\textcolor{white}{0}47}   & 36.0 \stdv{35.2}     & 20.2 \stdv{31.4} \\
\quad + LS        & 425 \stdv{172} & 117 \stdv{\textcolor{white}{0}32}  & 27.8 \stdv{21.0}  & \textcolor{white}{0}4.2 \stdv{\textcolor{white}{0}2.3}   \\ \hline
\rowcolor{mygray} RIME      & \textbf{741} \stdv{139}  & \textbf{301} \stdv{184} & \textbf{80.0} \stdv{27.7}  & \textbf{58.5} \stdv{42.0} \\
\specialrule{.1em}{.05em}{.05em}
\end{tabular}
\label{table:robust training methods}
\end{center}
\vspace{-10pt}
\end{table}

\textbf{Performance with real non-expert human teachers.}
The ultimate goal of improving robustness in PbRL is to better align with human users. To investigate how RIME performs with non-expert humans, we conduct experiments on Hopper utilizing non-expert human instructors, following the approach of \citet{christiano2017deep,lee2021pebble}. Specifically, we selected five students with no prior knowledge on robotics from unrelated majors to provide annotations in an online setting. These students were instructed solely on the objective: to train an agent to perform backflips and received no additional information or guidance on the task. Their annotations were later used to train the algorithms RIME and PEBBLE. The feedback amount in total and per session are 100 and 10 respectively. For further details on the annotation protocol, refer to Appendix \ref{app:annotation protocol}. 

We employ a hand-crafted reward function designed by experts \citep{christiano2017deep} as the ground-truth scripted teacher. We notice that compared to ground-truth preferences, the error rate of our non-expert annotations reached nearly 40\%. The learning curves are shown in Fig. \ref{fig:human teacher_a}. We find that RIME significantly outperforms PEBBLE when learning from actual non-expert human teachers and successfully performs consecutive backflips using 100 non-expert feedback, as shown in Figure \ref{fig:human teacher_b}. 

\begin{figure}[ht]
    \vspace{-5pt}
  \centering
  \begin{minipage}{.4\columnwidth}
    \centering
    \includegraphics[width=\columnwidth]{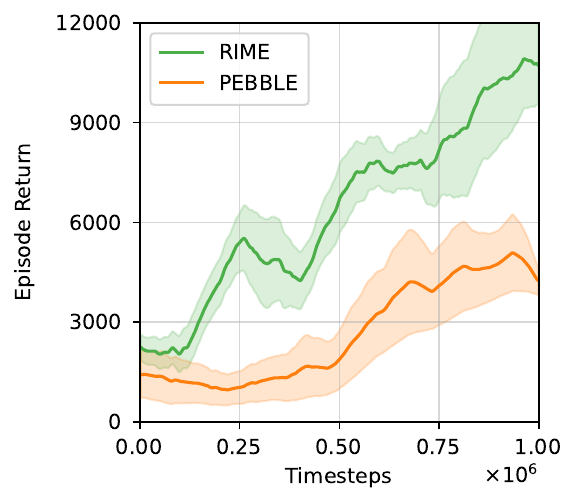}
  \end{minipage}
  \begin{minipage}{.56\columnwidth}
    \centering
    \includegraphics[width=\columnwidth]{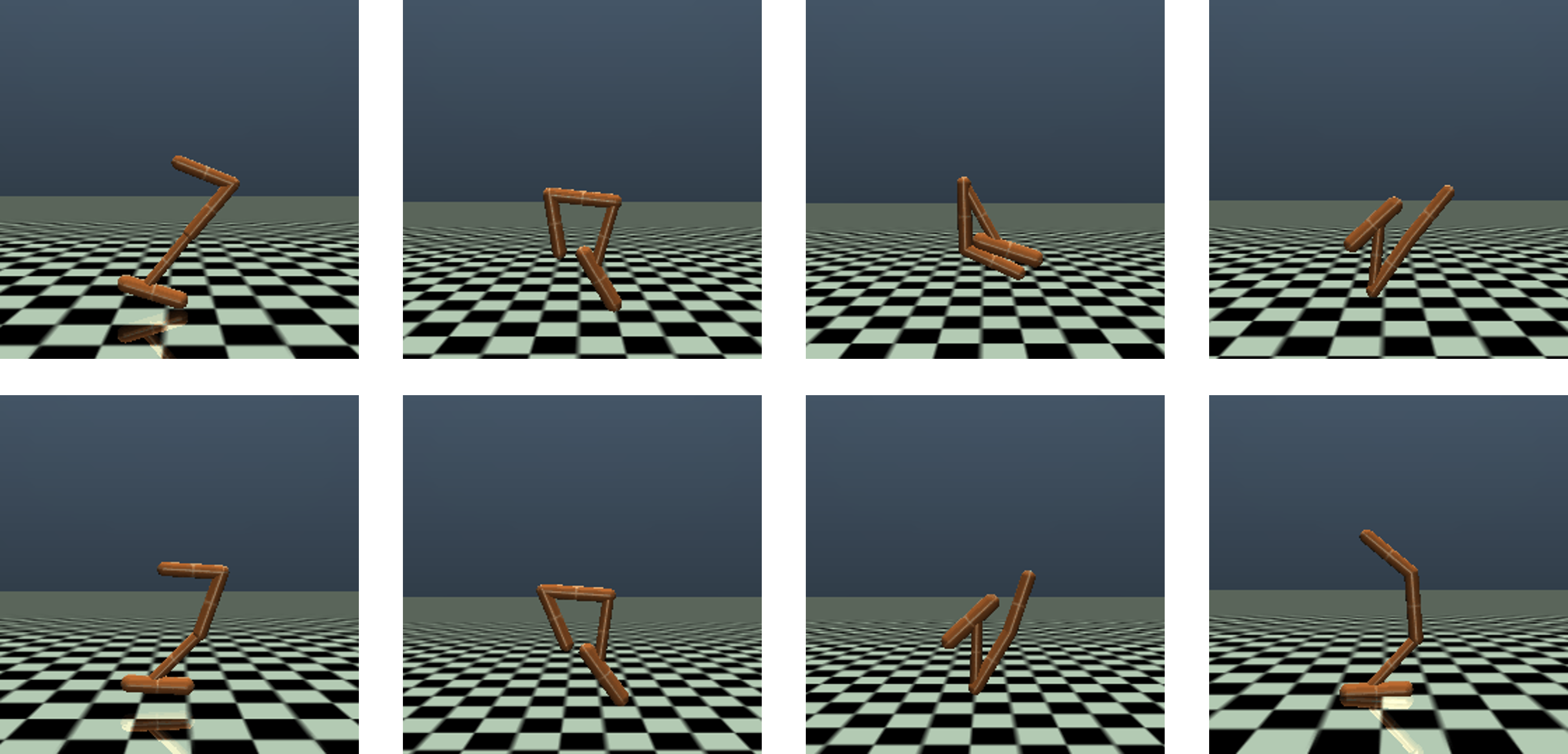}
  \end{minipage} \\
  \vspace{-10pt}
  \subfloat[\centering Learning curves of RIME and PEBBLE on Hopper]{\hspace{.43\columnwidth} \label{fig:human teacher_a}} 
  \subfloat[\centering Frames of consecutive backflips from agents trained by RIME.]{\hspace{.56\columnwidth} \label{fig:human teacher_b}}
  \caption{Ablation study on real non-expert human teachers}
\end{figure}

\textbf{Trade-off between sample efficiency and robustness.} To quantify the trade-off between sample efficiency and robustness, we conduct experiments with RIME, where we either held the feedback volume constant and increase the error rate, or maintain the error rate while increasing the feedback volume, as shown in Tables \ref{table:trade off error rate} and \ref{table:trade off amount}, respectively. Table \ref{table:trade off error rate} indicates a progressive decline in performance with rising error rates, with a notable deterioration at an error rate of 0.3 across all four environments. Table \ref{table:trade off amount} demonstrates that increasing the feedback volume improves performance, particularly at an error rate of 0.3, where doubling the feedback approximately doubles the performance gains. The same analysis repeated for PEBBLE is shown in Appendix \ref{app:addition_result}.

\begin{table}[h]
    \vspace{-10pt}
    \small
    \setlength{\belowcaptionskip}{1.0pt}
    \begin{center}
    \caption{Results of RIME as the error rate increases with constant amount of feedback, across 5 runs.}
    \setlength{\tabcolsep}{1mm}{
    \begin{tabular}{l|c|ccc}
    \specialrule{.1em}{.05em}{.05em}
        \multirow{2}{*}{Environment} & Feedback & \multicolumn{3}{c}{Error rate} \\ \cline{3-5}
         & volume & 0.1 & 0.2 & 0.3 \\
        \hline
        Walker & 500 & \textbf{909} \stdv{132} & 806 \stdv{162} & 493 \stdv{91} \\
        Quadruped & 2000 & \textbf{484} \stdv{166} & 400 \stdv{166} & 117 \stdv{24} \\
        Button-press & 10000 & \textbf{99.9} \stdv{0.3} & 92.2 \stdv{15.6} & 38.8 \stdv{33.8} \\
        Hammer & 20000 & \textbf{46.5} \stdv{43.9} & 41.2 \stdv{48.0} & 1.8 \stdv{3.1} \\
        \specialrule{.1em}{.05em}{.05em}
    \end{tabular}
    \label{table:trade off error rate}}
    \end{center}
    \vspace{-5pt}
\end{table}

\begin{table}[h]
    \vspace{-10pt}
    \small
    \setlength{\belowcaptionskip}{1.0pt}
    \begin{center}
    \caption{Results of RIME as the feedback volume increases with constant error rate, across 5 runs. $N$ refers to the minimal feedback volume for each environment shown in Table \ref{table:hyperparameters_condition}.}
    \setlength{\tabcolsep}{1mm}{
    \begin{tabular}{l|c|c|cc}
    \specialrule{.1em}{.05em}{.05em}
        \multirow{2}{*}{Domain} & \multirow{2}{*}{Environment} & \multirow{2}{*}{Error rate} & \multicolumn{2}{c}{Feedback volume} \\ \cline{4-5}
         & & & $N$ & 2$N$ \\
        \hline
        \multirow{4}{*}{DMControl} & \multirow{2}{*}{Walker} & 0.2 & 806 \stdv{162} & \textbf{894} \stdv{80} \\
         & & 0.3 & 493 \stdv{91} & \textbf{741} \stdv{139} \\ \cline{2-5}
         & \multirow{2}{*}{Quadruped} & 0.2 & 400 \stdv{166} & \textbf{477} \stdv{152} \\
         & & 0.3 & 117 \stdv{24} & \textbf{301} \stdv{184} \\ \hline
        \multirow{2}{*}{Meta-world} & \multirow{2}{*}{Button-press} & 0.2 & 92.2 \stdv{15.6} & \textbf{93.4} \stdv{13.6} \\
         & & 0.3 & 38.8 \stdv{33.8} & \textbf{80.0} \stdv{27.7} \\
        \specialrule{.1em}{.05em}{.05em}
    \end{tabular}
    \label{table:trade off amount}}
    \end{center}
    \vspace{-15pt}
\end{table}

\textbf{Component analysis.} 
We perform an ablation study to individually evaluate each technique in RIME: warm start (WS), lower bound $\tau_\text{lower}$, and upper bound $\tau_\text{upper}$ of KL divergence. We present results in Table \ref{table:component analysis} in which we compare the performance of removing each component from RIME. We observe that warm start is crucial for robustness when the number of feedback is quite limited (\ie on Walker-walk). This is because the limited samples restrict the capability of the reward model, leading to more rounds of queries to cross the transition gap. Moreover, identifying whether the sample is corrupted or not is challenging for the discriminator initially due to the limited training data, underscoring the need for well-initialized reward models.

The lower bound $\tau_\text{lower}$ for filtering trustworthy samples is important in high error rate ($\epsilon=0.3$ on Walker-walk and Button-press) and adequate feedback (on Button-press) situations. The upper bound $\tau_\text{upper}$ for flipping labels always brings some improvements in our ablation experiments. The full algorithm outperforms every other combination in most tasks. Additionally, the results show that although the contribution of warm start and denoising discriminator vary in different environments with low-level noise, they are both effective and their combination proves essential for the overall success of RIME in environments with high-level noise.

\begin{table}[H]
    \vspace{-5pt}
    \small
    \setlength{\belowcaptionskip}{1.0pt}
    \begin{center}
    \caption{Ablation study of components in RIME on Walker and Button-press with different error rates, across 5 runs.}
    \setlength{\tabcolsep}{1mm}{
    \begin{tabular}{ccc|cc|cc}
    \specialrule{.1em}{.05em}{.05em}
        \multicolumn{3}{c|}{Component}  & \multicolumn{2}{c|}{Walker} & \multicolumn{2}{c}{Button Press} \\
        \hline
        WS & $\tau_\text{lower}$ & $\tau_\text{upper}$ & $\epsilon=0.1$ & $\epsilon=0.3$ & $\epsilon=0.1$ & $\epsilon=0.3$ \\
        \hline
         \ding{55} & \ding{55} & \ding{55} & 749 \stdv{123} & 431 \stdv{157} & 93.1 \stdv{10.6} & 22.0 \stdv{13.8} \\
         \ding{51} & \ding{55} & \ding{55} & 821 \stdv{\textcolor{white}{0}93} & 483 \stdv{144} & 92.7 \stdv{11.8} & 25.8 \stdv{16.2} \\
         \ding{55} & \ding{51} & \ding{51} & 688 \stdv{148} & 457 \stdv{190} & 97.2 \stdv{\textcolor{white}{0}4.6} & 64.7 \stdv{26.5} \\
         \ding{51} & \ding{55} & \ding{51} & 886 \stdv{\textcolor{white}{0}70} & 492 \stdv{188} & 89.8 \stdv{11.5} & 35.1 \stdv{24.1} \\
         \ding{51} & \ding{51} & \ding{55} & 842 \stdv{107} & 693 \stdv{167} & 96.9 \stdv{\textcolor{white}{0}4.0} & 51.4 \stdv{30.0} \\
        \rowcolor{mygray} \ding{51} & \ding{51} & \ding{51} & \textbf{909} \stdv{132} & \textbf{741} \stdv{139} & \textbf{99.9} \stdv{\textcolor{mygray}{0}0.3} & \textbf{80.0} \stdv{27.7} \\
        \specialrule{.1em}{.05em}{.05em}
    \end{tabular}
    \label{table:component analysis}}
    \end{center}
\end{table}

\section{Conclusion}
In this paper, we present RIME, a robust algorithm for preference-based reinforcement learning (PbRL) designed for effective reward learning from noisy preferences. Unlike previous research which primarily aims to enhance feedback efficiency, RIME focuses on improving robustness by employing a sample selection-based discriminator to dynamically denoise preferences. To reduce accumulated error due to incorrect selection, we utilize a warm-start method for the reward model, enhancing the initial capability of the denoising discriminator. The warm-start approach also facilitates a seamless transition from pre-training to online training. Our experiments show that RIME substantially boosts the robustness of the state-of-the-art PbRL method across a wide range of complex robotic manipulation and locomotion tasks. Ablation studies further demonstrate that the warm-start approach is crucial for both robustness and feedback efficiency. We believe that RIME has the potential to broaden the applicability of PbRL by leveraging preferences from non-expert users or crowd-sourcing platforms.

\textbf{Limitations.} 
The intrinsic challenge of PbRL with noisy preferences is the trade-off between sample-efficiency and robustness. As shown in Table \ref{table:hyperparameters_condition}, \ref{table:trade off error rate}, and \ref{table:trade off amount}, RIME still needs to increase the amount of feedback in total to perform reasonably as the error rate increases. Therefore, exploring how to achieve better trade-offs is an interesting future direction. Another limitation is that RIME introduces several hyperparameters in the denoising discriminator, resulting in extra tuning efforts for optimal performance. Moreover, the divergence between the noise in actual human preferences and the noise models proposed by BPref deserves further investigation. While attaining high performance across multiple simulated teachers suggests a likelihood of good performance under real human teachers, it remains imperative to investigate simulated noise models that more closely align with the characteristics of real-world human preference noise.

\section*{Impact Statement}
Compared to natural language processing, control tasks typically demand higher-quality human feedback~\citep{kim2022preference}. Our work reduces the difficulty of annotating human preferences for control tasks, allowing for the presence of noise in preferences and thereby alleviating the requirement of domain knowledge for annotators. This enables human preferences for control tasks to be sourced from crowd-sourcing platforms or ordinary users, potentially expanding the application scope of PbRL.

\section*{Acknowledgments}
We are grateful to the anonymous reviewers and editors for their insightful suggestions. This work was partially supported by the National Science and Technology Major Project (2022ZD0117102), the National Natural Science Foundation of China under Grants 62271485, U1909204 and 62303462, the Provincial Key Research and Development Program of Zhejiang (project number: 2022C01129), Ningbo International Science and Technology Cooperation Project (2023H020), and Beijing Natural Science Foundation under Grant L233005. Lastly, special thanks go to Tongtian Yue, Ruixi Qiao, and Yueyang Yang for their assistance in data annotation, engaging discussions, and meticulous review of draft versions.

\bibliography{example_paper}
\bibliographystyle{icml2024}

\newpage
\appendix
\onecolumn

\section{RIME Algorithm Details} \label{app:algo}
In this section, we provide the full procedure for RIME based on the backbone PbRL algorithm, PEBBLE~\citep{lee2021pebble}, in Algorithm~\ref{alg:training}.
\begin{algorithm}
\caption{RIME} \label{alg:training}
\begin{algorithmic}[1]
    \State Initialize policy $\policy_\phi$, Q-network $Q_\theta$ and reward model $\hat{r}_\psi$
    \State Initialize replay buffer $\mathcal{B}\leftarrow\emptyset$
    \For{each pre-training step $t$} \Comment{\textsc{Unsupervised Pre-training}}
        \State Collect $\state_{t+1}$ by taking $\action_t\sim\policy_\phi(\action_t|\state_t)$
        \State Compute normalized intrinsic reward $r^\text{int}_{\text{norm},t}\leftarrow r^\text{int}_\text{norm}(\state_t)$ as in Equation (\ref{eq:norm intrinsic reward})
        \State Store transitions $\mathcal{B}\leftarrow\mathcal{B}\cup\left\{(\state_t,\action_t,\state_{t+1},r^\text{int}_{\text{norm},t})\right\}$
        \For{each gradient step}
            \State Sample minibatch $\left\{(\state_j,\action_j,\state_{j+1},r^\text{int}_{\text{norm},j})\right\}^B_{j=1}\sim\mathcal{B}$
            \State Optimize policy and Q-network with respect to $\phi$ and $\theta$ using SAC
            \State Update reward model $\hat{r}_\psi$ according to Equation (\ref{eq:loss mse}) \Comment{{{\textsc{Warm start}}}}
        \EndFor
    \EndFor
    
    \State Initialize the maximum KL divergence value $\rho=\infty$ 
    \State Initialize a dataset of noisy preferences $\mathcal{D}_\text{noisy}\leftarrow\emptyset$
    \For{each training step $t$} \Comment{{{\textsc{Online training}}}}
        \If{step to query preferences} \Comment{{{\textsc{Robust reward learning}}}}
        \State Generate queries from replay buffer $\{(\sigma^0_i, \sigma^1_i)\}_{i=1}^{N_\text{query}} \sim\mathcal B$ and corresponding human feedback $\{\tilde{y}_i\}_{i=1}^{N_\text{query}}$
        \State Store preferences $\mathcal{D}_\text{noisy}\leftarrow\mathcal{D}_\text{noisy}\cup\{(\sigma^0_i,\sigma^1_i,\tilde{y}_i)\}_{i=1}^{N_\text{query}}$
        \State Compute lower bound $\tau_\text{lower}$ according to Equation (\ref{eq:threshold}) \State Filter trustworthy samples $\mathcal{D}_t$ using lower bound $\tau_\text{lower}$ as in Equation (\ref{eq:trust})
        \State Flip labels using upper bound $\tau_\text{upper}$ to obtain dataset $\mathcal{D}_f$ as in Equation (\ref{eq:flip})
        \State Update reward model $\hat{r}_\psi$ with samples from $\mathcal{D}_t\cup\mathcal{D}_f$ according to Equation (\ref{eq:final loss})
        \State Relabel entire replay buffer $\mathcal{B}$ using $\hat{r}_\psi$
        \State Update parameter $\rho$ with the maximum KL divergence between predicted and annotated labels in dataset $\mathcal{D}_t\cup\mathcal{D}_f$
        \EndIf
        \State Collect $\state_{t+1}$ by taking $\action_t \sim \policy_\phi(\action_t | \state_t)$
        \State Store transitions $\mathcal{B} \leftarrow \mathcal{B}\cup \{(\state_t,\action_t,\state_{t+1},\rewmodel(\state_t, \action_t))\}$
        \For{each gradient step}
            \State Sample minibatch from replay buffer $\{(\state_{j}, \action_{j}, \state_{j+1}, \hat{r}_\psi(\state_j,\action_j)\}_{j=1}^{B}
    \sim\mathcal {B}$
            \State Optimize policy and Q-network with respect to $\phi$ and $\theta$ using SAC
        \EndFor
    \EndFor
\end{algorithmic}
\end{algorithm}

\section{Effects of biased reward model}
Previous work empirically showed the detrimental impact of noisy preference on the reward model \citep{lee2021b}. To further demonstrate the effects of a biased reward model, we introduce the following theorem and give the proof as follows.

\begin{assumption}[Fitting error of reward model]
    Post the phase of reward learning, the fitting error between the learned reward model $\hat{r}_\psi$ and the ground-truth reward function $r^*$ within the state-action distribution encountered by policy $\pi$ is upper-bounded by a value $\delta$:
    \begin{align}
        \expec_{(\state,\action)\sim \rho^\pi}\left| 
        \hat{r}_\psi(\state,\action)-r^*(\state,\action)
        \right| \leq \delta
    \end{align}
\end{assumption}

\begin{theorem}[Upper bound of Q-function error]
Consider a Markov Decision Process characterized by the state transition function $P$, ground-truth reward function $r^*$, and discount factor $\gamma$. Let $Q^\pi_\psi$ and $Q^\pi_*$ denote the Q-function for policy $\pi$ with respect to the learned reward model $\hat{r}_\psi$ and ground-truth reward function $r^*$, respectively. Then the error of policy estimation between $Q^\pi_\psi$ and $Q^\pi_*$ is upper-bounded by the fitting error $\delta$ of the reward model:
    \begin{align}
        \expec_{(\state,\action)\sim \rho^\pi}\left|
        Q^\pi_\psi(\state,\action)-Q^\pi_*(\state,\action)
        \right| \leq \frac{\delta}{1-\gamma}
    \end{align}
\end{theorem}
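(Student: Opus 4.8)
The plan is to bound the Q-function error by a recursive (fixed-point) argument exploiting the Bellman equation. Recall that for any policy $\pi$ and reward function $r$, the Q-function satisfies $Q^\pi_r(\state,\action) = r(\state,\action) + \gamma\,\expec_{\state'\sim P(\cdot|\state,\action),\,\action'\sim\pi(\cdot|\state')}\left[ Q^\pi_r(\state',\action') \right]$. Applying this identity to both $Q^\pi_\psi$ (with reward $\hat{r}_\psi$) and $Q^\pi_*$ (with reward $r^*$) and subtracting, I would write
\begin{align}
Q^\pi_\psi(\state,\action) - Q^\pi_*(\state,\action) = \left(\hat{r}_\psi(\state,\action) - r^*(\state,\action)\right) + \gamma\,\expec_{\state',\action'}\left[ Q^\pi_\psi(\state',\action') - Q^\pi_*(\state',\action') \right].
\end{align}
Taking absolute values, using the triangle inequality, and then taking the expectation over $(\state,\action)\sim\rho^\pi$, the first term contributes at most $\delta$ by the Fitting-error Assumption.

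The key step is handling the discounted term: I would use the fact that $\rho^\pi$ is the (discounted) state-action occupancy measure of $\pi$, so that pushing $(\state,\action)\sim\rho^\pi$ through the transition $P$ and policy $\pi$ yields again a distribution dominated by / equal to $\rho^\pi$ in the appropriate sense. Concretely, $\expec_{(\state,\action)\sim\rho^\pi}\expec_{\state'\sim P(\cdot|\state,\action),\,\action'\sim\pi}\left[ f(\state',\action') \right] = \expec_{(\state',\action')\sim\rho^\pi}\left[ f(\state',\action') \right]$ when $\rho^\pi$ is the stationary occupancy (or, with the discounted occupancy, up to the standard $(1-\gamma)$ bookkeeping). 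Writing $E := \expec_{(\state,\action)\sim\rho^\pi}\left| Q^\pi_\psi(\state,\action) - Q^\pi_*(\state,\action) \right|$, this gives the self-referential inequality $E \le \delta + \gamma E$, which rearranges to $E \le \frac{\delta}{1-\gamma}$, the claimed bound.

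The main obstacle — and the point that needs the most care — is the measure-pushforward step: justifying that the expectation of the next-state-action error under $\rho^\pi$ is again controlled by $E$ rather than by some other distribution. If $\rho^\pi$ is taken to be the stationary distribution this is exact by definition of stationarity; if it is the normalized discounted occupancy $\rho^\pi(\state,\action) = (1-\gamma)\sum_{t\ge0}\gamma^t \Pr(\state_t=\state,\action_t=\action)$, then one must either absorb a $(1-\gamma)$ factor correctly or invoke a contraction argument in $L^1(\rho^\pi)$ for the Bellman operator. An alternative, cleaner route that avoids the occupancy-measure subtlety entirely is to bound in sup-norm first: $\lVert Q^\pi_\psi - Q^\pi_* \rVert_\infty \le \lVert \hat{r}_\psi - r^* \rVert_\infty + \gamma \lVert Q^\pi_\psi - Q^\pi_* \rVert_\infty$ via the $\gamma$-contraction of the Bellman operator, giving $\lVert Q^\pi_\psi - Q^\pi_* \rVert_\infty \le \frac{1}{1-\gamma}\lVert \hat{r}_\psi - r^* \rVert_\infty$; however, this requires a uniform reward-error bound rather than the expectation bound in the Assumption, so I expect the paper to take the $L^1(\rho^\pi)$ route and handle the pushforward directly. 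Either way the resulting constant is $\frac{1}{1-\gamma}$, matching the statement.
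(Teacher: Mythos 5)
Your proposal matches the paper's proof essentially exactly: both expand the Bellman equation for $Q^\pi_\psi$ and $Q^\pi_*$, subtract, apply the triangle inequality, identify the pushforward of $\rho^\pi$ through $P$ and $\pi$ with $\rho^\pi$ itself, and resolve the resulting recursion $E \le \delta + \gamma E$ (the paper unrolls it into a geometric series rather than rearranging, which is the same thing). You are in fact more careful than the paper about the one genuinely delicate step — the paper silently asserts the measure-pushforward equality that you correctly flag as requiring $\rho^\pi$ to be stationary (or a $(1-\gamma)$ bookkeeping argument for the discounted occupancy).
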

\begin{proof}
    \begin{align}
        & \expec_{(\state,\action)\sim \rho^\pi}\left|
        Q^\pi_\psi(\state,\action)-Q^\pi_*(\state,\action)
        \right| \nonumber \\
        =&\expec_{(\state,\action)\sim \rho^\pi}\left|\hat{r}_\psi(\state,\action)+\gamma\sum_{\state'}P(\state'|\state,\action)\sum_{\action'}\pi(\action'|\state')Q^\pi_\psi(\state',\action')
        -r^*(\state,\action)-\gamma\sum_{\state'}P(\state'|\state,\action)\sum_{\action'}\pi(\action'|\state')Q^\pi_*(\state',\action')
        \right| \nonumber \\
        \leq & \expec_{(\state,\action)\sim \rho^\pi}\left|\hat{r}_\psi(\state,\action)-r^*(\state,\action)\right|+\gamma\expec_{(\state,\action)\sim \rho^\pi}\sum_{\state'}P(\state'|\state,\action)\sum_{\action'}\pi(\action'|\state')\left|Q^\pi_\psi(\state',\action')-Q^\pi_*(\state',\action')\right| \nonumber \\
        = & \delta+\gamma\expec_{(\state',\action')\sim\rho^\pi}\left|Q^\pi_\psi(\state',\action')-Q^\pi_*(\state',\action')\right| \nonumber \\
        \leq & \delta + \gamma\delta + \gamma^2\delta + \gamma^3\delta + \dots \nonumber \\
        \leq & \frac{\delta}{1-\gamma}
    \end{align}
\end{proof}

\section{Proof for Theorem \ref{the:KL}} \label{app:proof_theorem_1}
\setcounter{theorem}{0}
\begin{theoremm}[\ref{the:KL}]
\textit{Consider a preference dataset \(\{(\sigma^0_i,\sigma^1_i,\tilde{y}_i)\}_{i=1}^n\), where \(\tilde{y}_i\) is the annotated label for the segment pair \((\sigma^0_i,\sigma^1_i)\) with the ground truth label \(y_i\). Let \( x_i \) denote the tuple \( (\sigma^0_i, \sigma^1_i) \). Assume the cross-entropy loss \(\mathcal{L}^{\mathtt{CE}}\) for clean data (whose $\tilde{y}_i=y_i$) within this distribution is bounded by \(\rho\). Then, the KL divergence between the predicted preference \(P_\psi(x)\) and the annotated label \(\tilde{y}(x)\) for a corrupted sample \(x\) is lower-bounded as follows:}
\begin{equation}
    \KL \left( \tilde{y}(x) \Vert P_\psi(x) \right) \geq -\ln \rho + \frac{\rho}{2} + \mathcal{O}(\rho^2)
\end{equation}
\end{theoremm}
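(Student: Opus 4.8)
The plan is to reduce the KL divergence to an explicit function of a single predicted probability, bound that probability using the clean-data assumption, and finish with a Taylor expansion. First I would fix a corrupted sample $x=(\sigma^0,\sigma^1)$ and observe that under the ``Mistake'' model a corrupted annotation is a \emph{hard} label, so $\tilde{y}(x)$ has zero entropy. Writing $p:=P_\psi[\sigma^0\succ\sigma^1]$ and taking, without loss of generality, $y(x)=(1,0)$ (hence the corrupted label $\tilde{y}(x)=(0,1)$), the definition of KL divergence with the convention $0\ln 0=0$ collapses to
\begin{equation*}
    \KL\!\left(\tilde{y}(x)\,\Vert\, P_\psi(x)\right) = -\ln(1-p),
\end{equation*}
which is just $\mathcal{L}^{\mathtt{Reward}}$ evaluated at the annotated label (the label entropy being zero).

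Second, I would invoke the hypothesis that the reward model fits the generalizable pattern: its cross-entropy loss on clean data is at most $\rho$, and since the features $x$ of a corrupted sample come from the same distribution, the loss of $P_\psi$ against the \emph{true} label $y(x)=(1,0)$ is likewise at most $\rho$, i.e.\ $-\ln p\le\rho$, giving $p\ge e^{-\rho}$. Plugging this into the display and using that $-\ln(1-\cdot)$ is increasing yields
\begin{equation*}
    \KL\!\left(\tilde{y}(x)\,\Vert\, P_\psi(x)\right) \ge -\ln\!\left(1-e^{-\rho}\right);
\end{equation*}
the case $y(x)=(0,1)$ is identical after swapping coordinates. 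Expanding $e^{-\rho}=1-\rho+\tfrac{\rho^2}{2}+\mathcal{O}(\rho^3)$ gives $1-e^{-\rho}=\rho\bigl(1-\tfrac{\rho}{2}+\mathcal{O}(\rho^2)\bigr)$, so $-\ln(1-e^{-\rho})=-\ln\rho-\ln\bigl(1-\tfrac{\rho}{2}+\mathcal{O}(\rho^2)\bigr)=-\ln\rho+\tfrac{\rho}{2}+\mathcal{O}(\rho^2)$, which is the claimed bound.

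I expect the genuine difficulty to lie in the second step: transferring the clean-data loss bound $\rho$ to the model's prediction on a corrupted sample, measured against \emph{that} sample's ground-truth label. This is exactly where the empirical principle that networks learn clean patterns before memorizing noise enters, and it should be made an explicit assumption — the theorem is conditional on $P_\psi$ agreeing with the true preference map over the data distribution up to loss $\rho$. A minor bookkeeping check is that equally-preferable labels $(0.5,0.5)$ do not arise among corrupted samples under the ``Mistake'' model, so that the entropy term in the first reduction genuinely vanishes and the identity there is exact.
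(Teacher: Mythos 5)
Your proposal is correct and follows essentially the same route as the paper's proof: reduce the KL divergence against the flipped hard label to $-\ln$ of a single predicted probability, bound that probability via the clean-data loss assumption to get $\KL(\tilde{y}_c \Vert P_\psi) \geq -\ln(1-e^{-\rho})$, and Taylor-expand. The only differences are cosmetic: the paper also works through the $(0.5,0.5)$ case for completeness (explicitly noting it lies outside the Mistake model), and your remark that the bound tacitly requires $P_\psi$ to fit the corrupted sample's features well against its \emph{ground-truth} label is a fair observation about an assumption the paper likewise uses implicitly.
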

\begin{proof}
For a clean sample $(\sigma^0,\sigma^1)$ with annotated label $\tilde{y}$ and ground-truth label $y$, we have $\tilde{y}=y$. Denote the predicted label as $P_\psi$. In PbRL, the value of $y(0)$ can take one of three forms: $y(0)\in\{0,0.5,1\}$. We categorize and discuss these situations as follows:
\begin{enumerate} [leftmargin=4mm]
\item For $y(0)=0$:

Because the cross-entropy loss $\mathcal{L}^\mathtt{CE}$ for clean data is bounded by $\rho$, we can express:
\begin{align}
    \mathcal{L}^\mathtt{CE}(P_\psi,\tilde{y})=-\ln(1-P_\psi(0))\leq\rho
\end{align}

From the above, we have:
\begin{align}
    P_\psi(0)\leq1-\exp{(-\rho)}
\end{align}

Then if the label is corrupted, denoted by $\tilde{y}_c$ (\ie $\tilde{y}_c=(1,0)$ in this case), the KL divergence between the predicted label and the corrupted label is formulated as follows:
\begin{align}
    \KL (\tilde{y}_c \Vert P_\psi)=-\ln P_\psi(0)\geq -\ln(1-\exp(-\rho))
\end{align}

\item For $y(0)=1$:

The discussion parallels the $y(0)=0$ case. Hence, the KL divergence between the predicted label and the corrupted label also maintains a lower bound:
\begin{align}
    \KL ( \tilde{y}_c \Vert P_\psi)\geq -\ln(1-\exp(-\rho))
\end{align}

\item For $y(0)=0.5$:

Although this case is not under the mistake model settings~\citep{lee2021b}, the lower bound still holds in this case. Due to the bounded cross-entropy loss $\mathcal{L}^\mathtt{CE}$ for clean data, we have:
\begin{align}
    \mathcal{L}^\mathtt{CE}(P_\psi,\tilde{y})=-\frac12\ln P_\psi(0)-\frac12\ln(1-P_\psi(0))\leq\rho
    \label{eq:loss bound for 0.5}
\end{align}

Solving the inequality (\ref{eq:loss bound for 0.5}), we can get:
\begin{align}
    P_\psi(0)^2-P_\psi(0)+\exp(-2\rho)\leq 0
    \label{eq:inequality for 0.5}
\end{align}

When $\rho\geq\ln2$, the inequality (\ref{eq:inequality for 0.5}) has a solution:
\begin{align}
    1-p\leq P_\psi(0)\leq p
\end{align}
where $p=\frac{1+\sqrt{1-4\exp(-2\rho)}}{2}$.

Then if the label is corrupted, \ie $\tilde{y}_c\in\{(0,1),(1,0)\}$, the KL divergence between the predicted label and the corrupted label is formulated as follows:
\begin{align}
    \KL (\tilde{y}_c \Vert P_\psi)\geq &\min(-\ln P_\psi(0),-\ln(1-P_\psi(0))) = -\ln p
\end{align}

Construct an equation about $\rho$:
\begin{align}
    f(\rho) = p-1+\exp(-\rho)=\frac{1+\sqrt{1-4\exp(-2\rho)}}{2}-1+\exp(-\rho)
    \label{eq:comparision eq on rho}
\end{align}
where $\rho\geq\ln2$.

Denote $z=\exp(-\rho)$, Equation (\ref{eq:comparision eq on rho}) can be simplified as follows:
\begin{align}
    f(z) = z + \frac{\sqrt{1-4z^2}}{2} - \frac12
\end{align}
where $0<z\leq\frac12$.

Derivative of function $f$ with respect to $z$, we have:
\begin{align}
    f^{'}(z) = 1 - 2\sqrt{\frac{1}{\frac{1}{z^2}-4}}
\end{align}

Function $f^{'}(z)$ decreases monotonically when $z\in(0,0.5]$, is greater than 0 on the interval $(0,\frac{\sqrt{2}}{4})$, and is less than 0 on the interval $(\frac{\sqrt{2}}{4},0.5]$. Therefore, we have:
\begin{align}
    f(z) \leq \max(f(0), f(\frac12)) = 0
\end{align}

Thus, $p\leq 1-\exp(-\rho)$ when $\rho\geq\ln2$. In turn, we have:
\begin{align}
    \KL (\tilde{y}_c \Vert P_\psi) = -\ln p \geq -\ln(1-\exp(-\rho))
\end{align}
\end{enumerate}

To sum up, inequality (\ref{eq:lower bound thero}) holds for the corrupted samples:
\begin{align}
    \KL (\tilde{y}_c \Vert P_\psi) \geq -\ln(1-\exp(-\rho))
    \label{eq:lower bound thero}
\end{align}

Perform Taylor expansion of the lower bound at $\rho=0$, we can get:
\begin{align}
    \KL (\tilde{y}_c \Vert P_\psi) \geq -\ln(1-\exp(-\rho)) = -\ln \rho + \frac{\rho}{2} + \mathcal{O}(\rho^2)
\end{align}
\end{proof}

\section{Experimental Details} \label{app:experiment_detail}
\subsection{Tasks} \label{app:tasks}
The robotic manipulation tasks from Meta-world~\citep{yu2020meta} and locomotion tasks from DMControl~\citep{tassa2018deepmind,tassa2020dm_control} used in our experiments are shown in Figure \ref{fig:env examples}.
\begin{figure*}[h]
\centering
\captionsetup[subfloat]{captionskip=-8pt}
\includegraphics[width=0.3\linewidth]{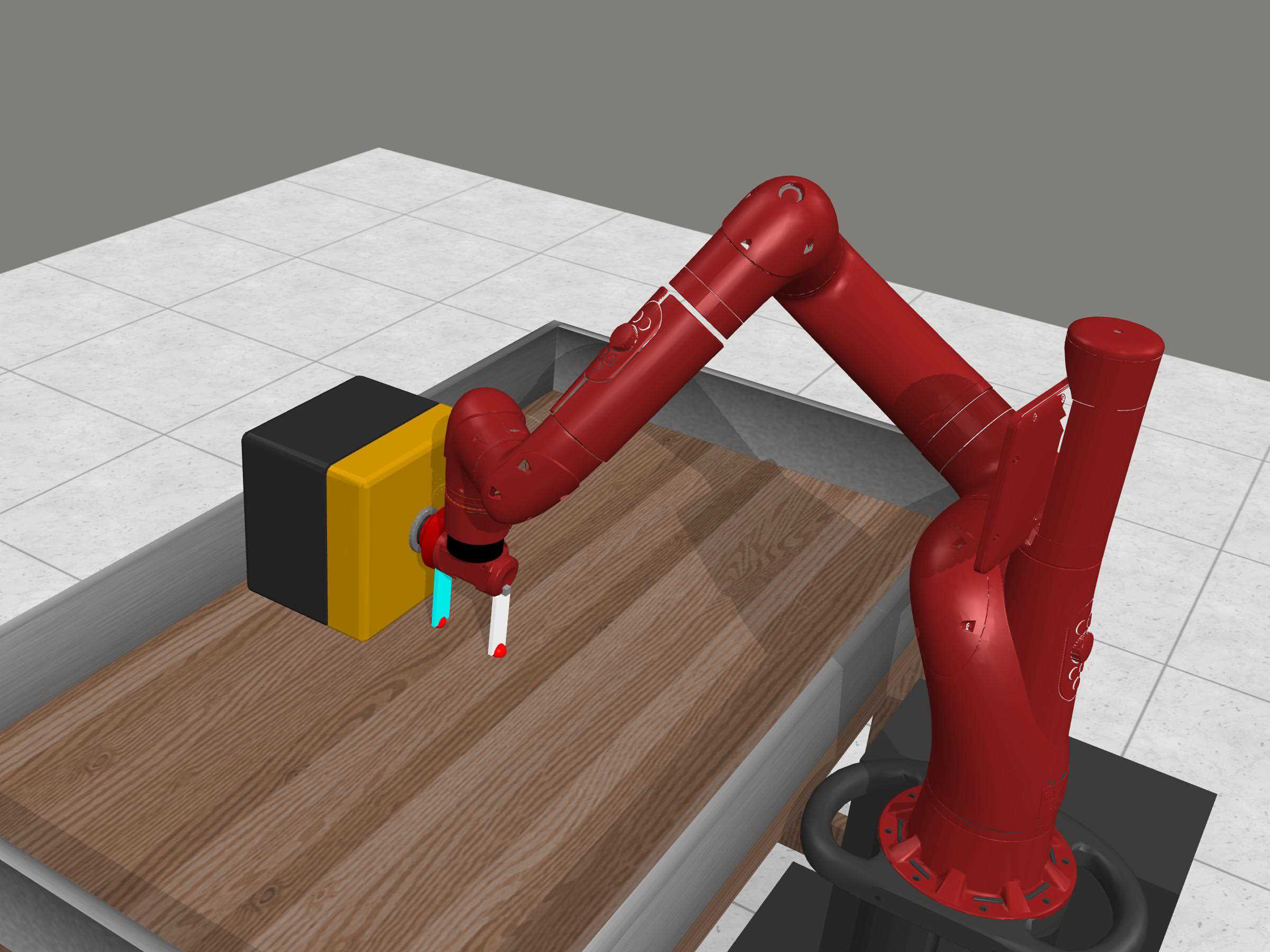}
\includegraphics[width=0.3\linewidth]{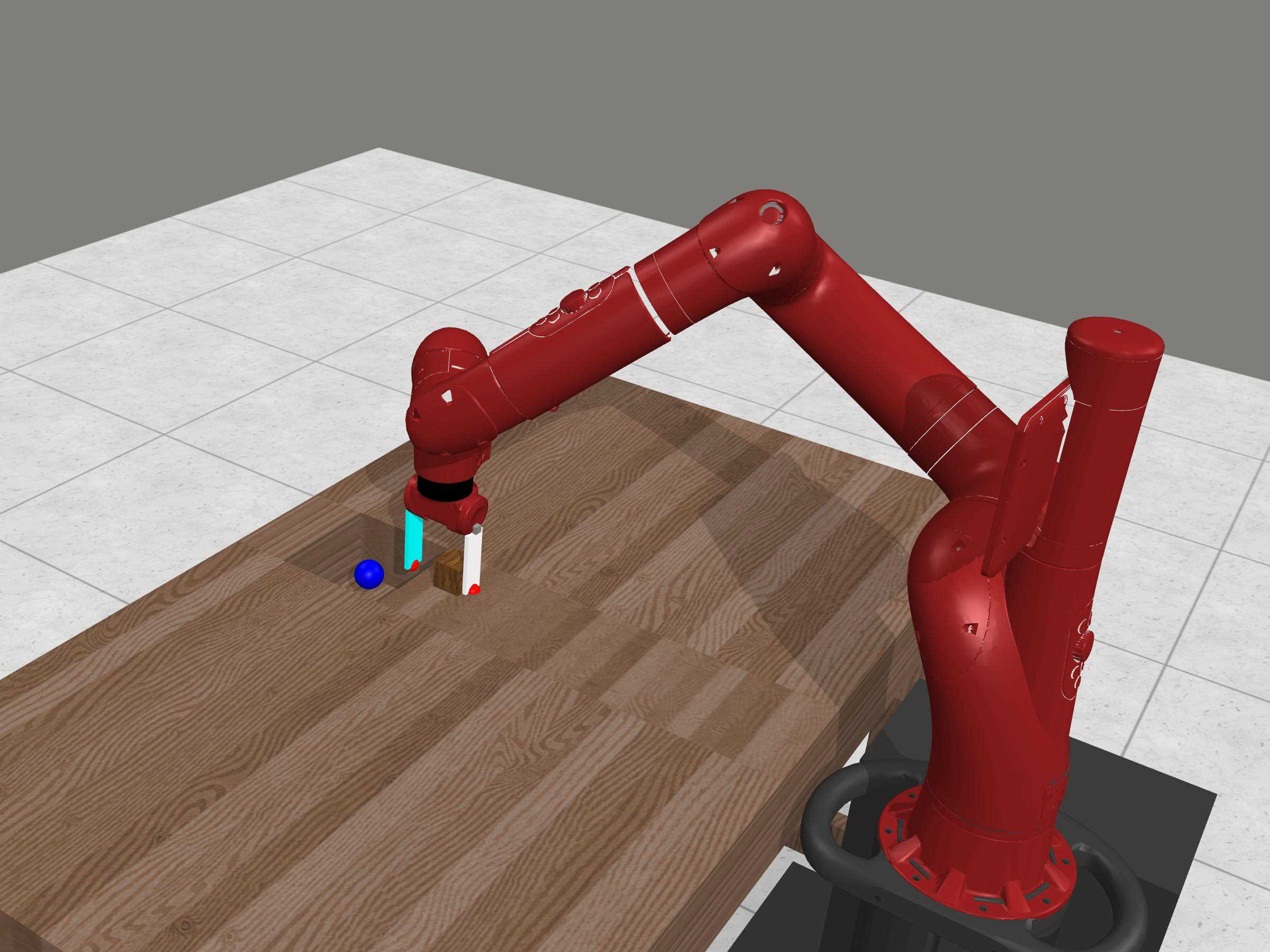}
\includegraphics[width=0.3\linewidth]{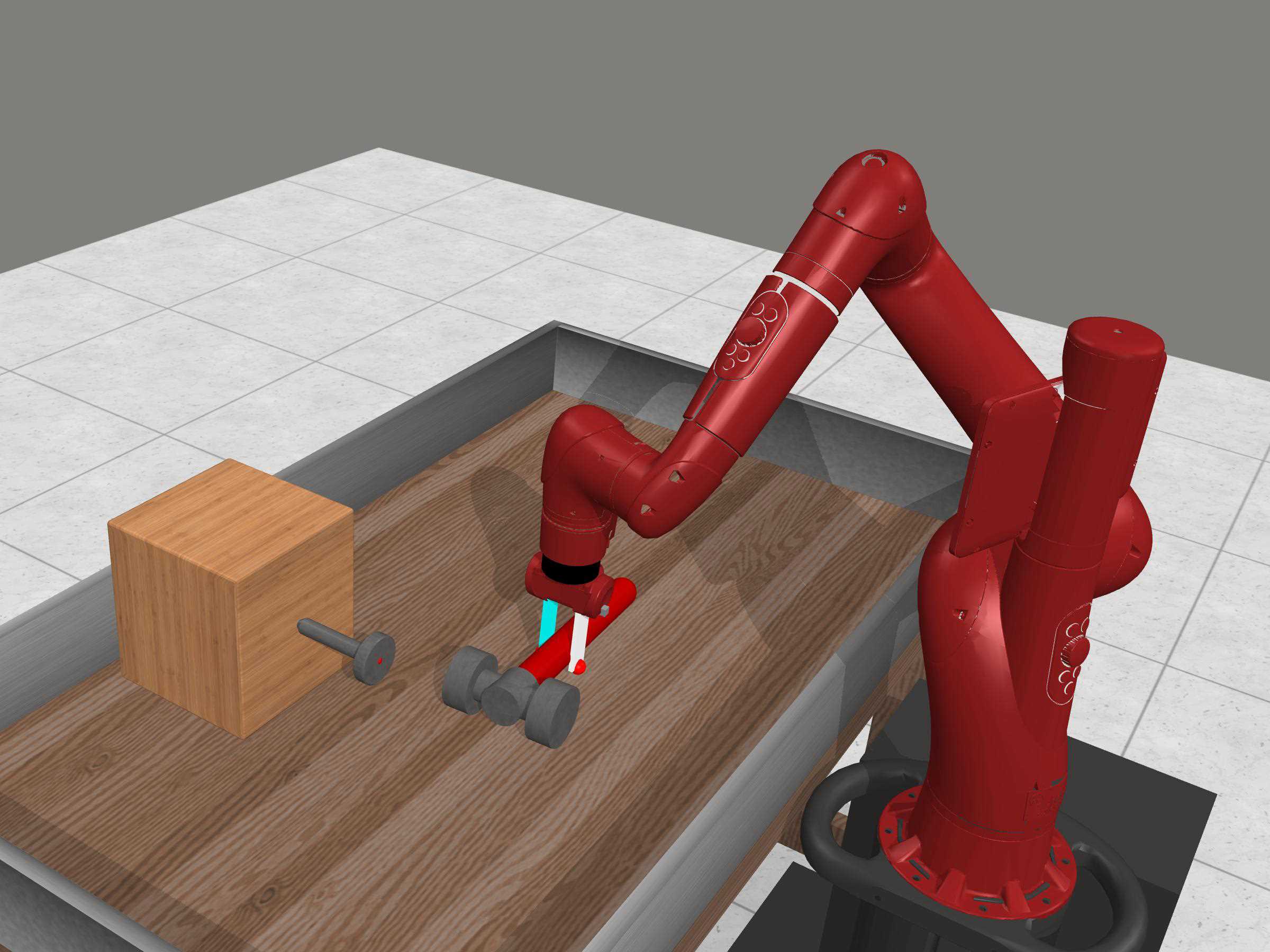}
\\
\subfloat[Button Press]{\hspace{0.33\linewidth}}
\subfloat[Sweep Into]{\hspace{0.33\linewidth}}
\subfloat[Hammer]{\hspace{0.33\linewidth}}
\\
\includegraphics[width=0.3\linewidth]{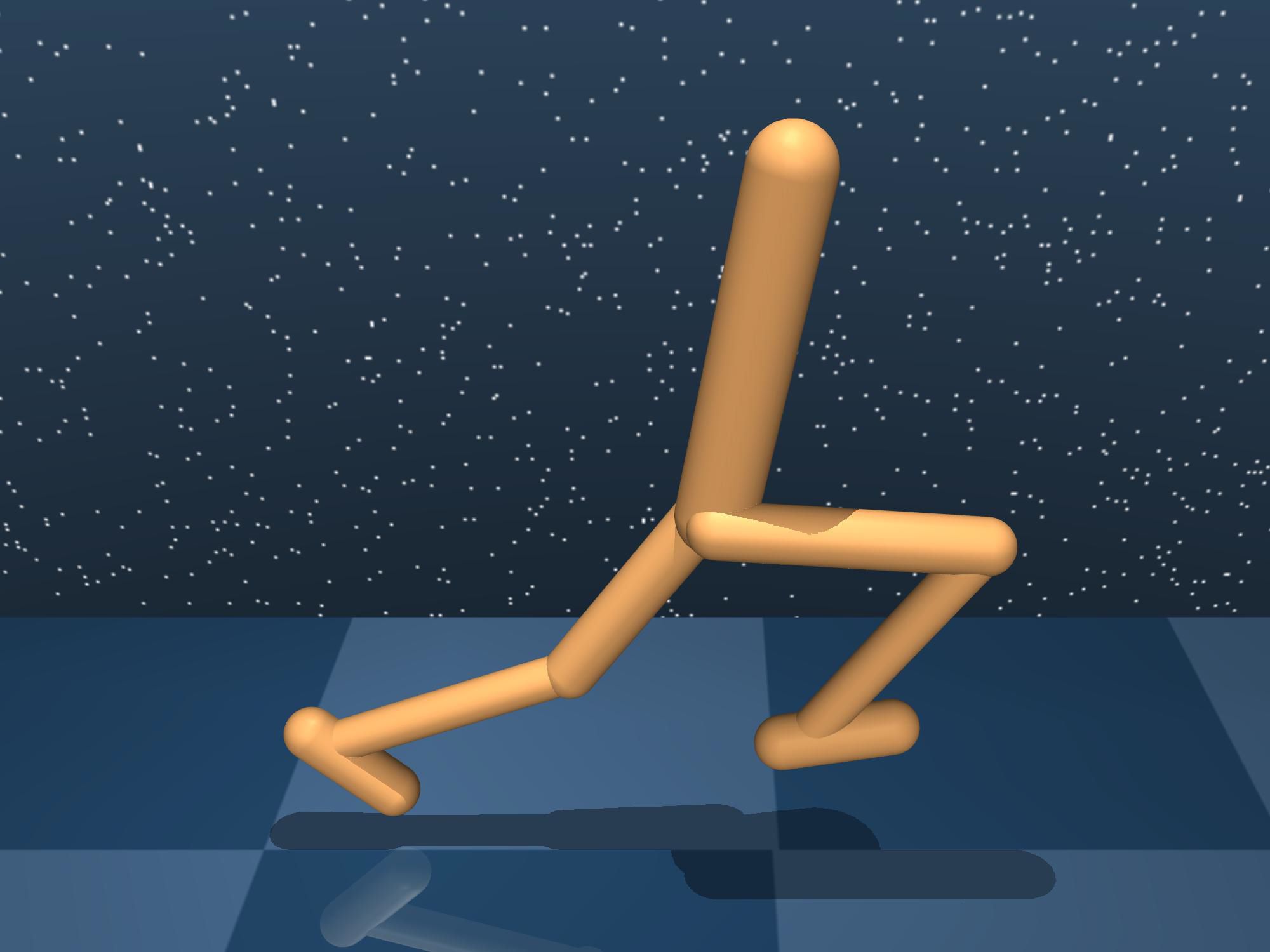}
\includegraphics[width=0.3\linewidth]{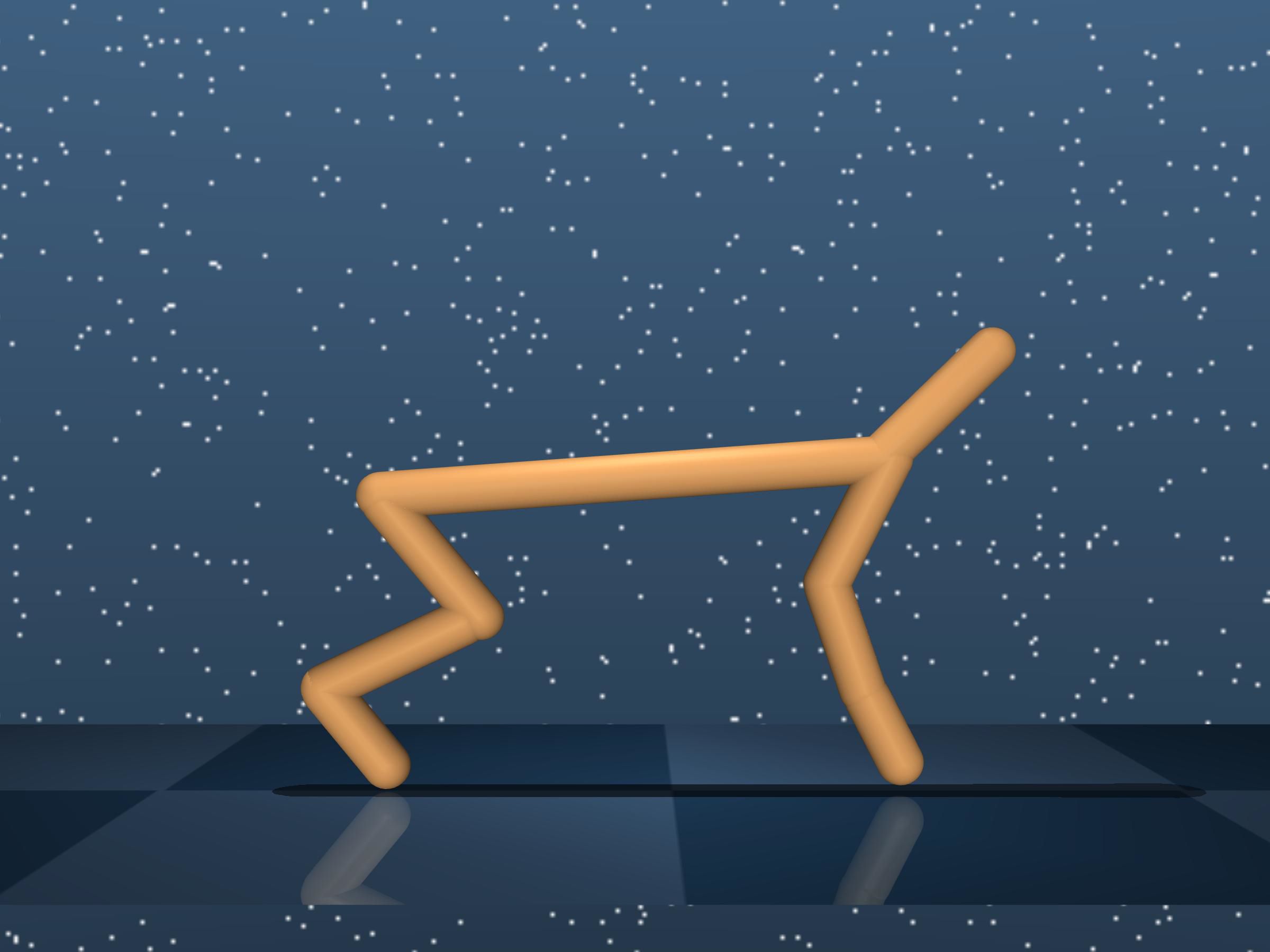}
\includegraphics[width=0.3\linewidth]{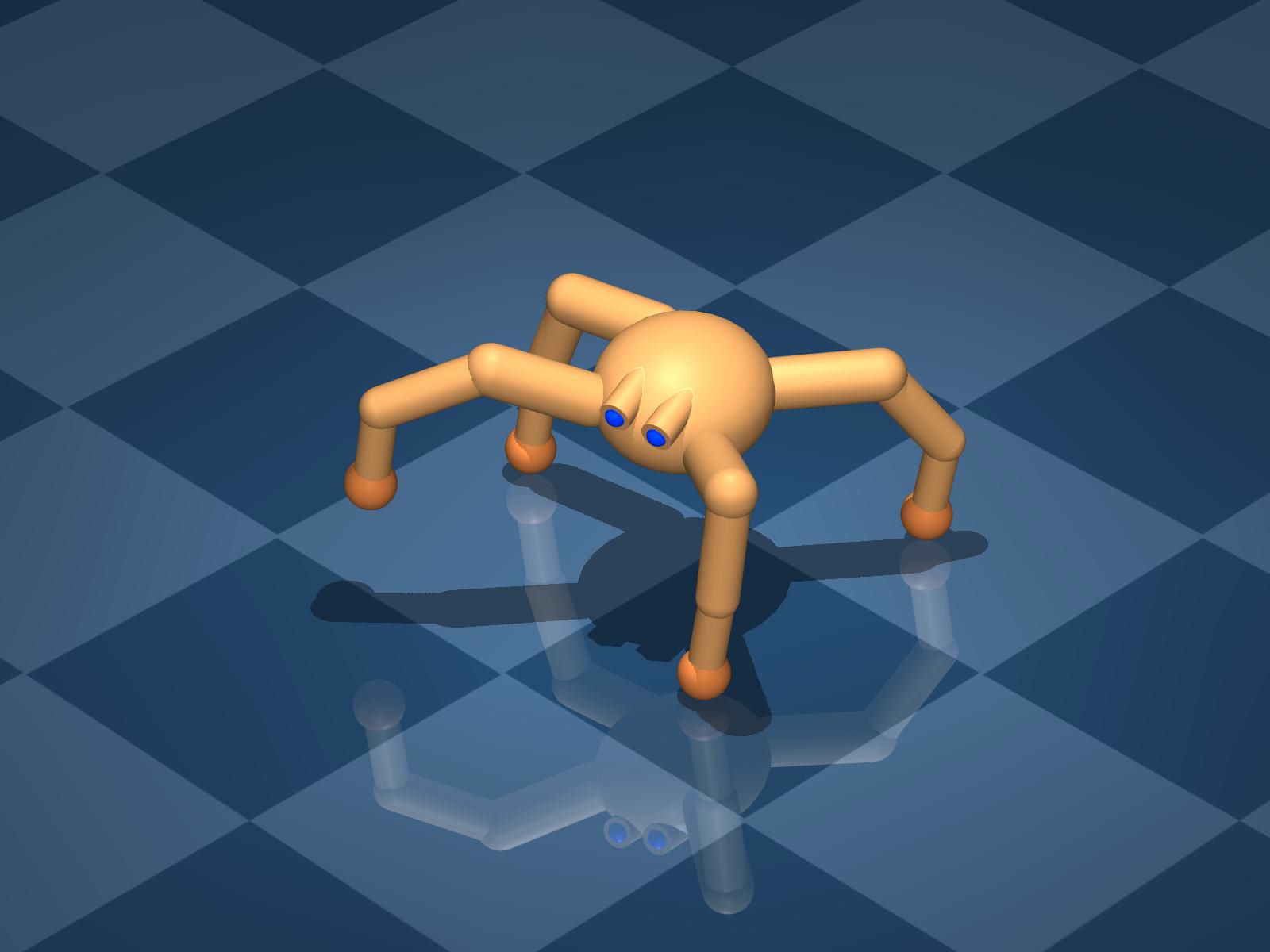}
\\
\subfloat[Walker]{\hspace{0.33\linewidth}}
\subfloat[Cheetah]{\hspace{0.33\linewidth}}
\subfloat[Quadruped]{\hspace{0.33\linewidth}}
\caption{Six tasks from Meta-world (a-c) and DMControl (d-f).}
\label{fig:env examples}
\vspace{-3mm}
\end{figure*}

\textbf{Meta-world Tasks:}
\begin{enumerate} [leftmargin=8mm]
\item[$\circ$] Button Press: An agent controls a robotic arm to press a button. The button's initial position is randomized.
\item[$\circ$] Sweep Into: An agent controls a robotic arm to sweep a ball into a hole. The ball's starting position is randomized.
\item[$\circ$] Hammer: An agent controls a robotic arm to hammer a screw into a wall. The initial positions of both the hammer and the screw are randomized.
\end{enumerate}

\textbf{DMControl Tasks:}
\begin{enumerate} [leftmargin=8mm]
\item[$\circ$] Walker: A planar walker is trained to control its body and walk on the ground.
\item[$\circ$] Cheetah: A planar biped is trained to control its body and run on the ground.
\item[$\circ$] Quadruped: A four-legged ant is trained to control its body and limbs, enabling it to crawl on the ground.
\end{enumerate}

\subsection{Implementation Details} \label{app:implemen_detail}
For the implementation of baselines, we use their corresponding publicly released repositories that are shown in Table \ref{table:source_code}. SAC serves as a performance upper bound because it uses a ground-truth reward function which is unavailable
in PbRL settings for training. The detailed hyperparameters of SAC are shown in Table \ref{table:hyperparameters_sac}. PEBBLE's settings remain consistent with its original implementation, and the specifics are detailed in Table \ref{table:hyperparameters_pebble}. For SURF, RUNE, MRN, and RIME,
most hyperparameters are the same as those of PEBBLE and other hyperparameters are
detailed in Table \ref{table:hyperparameters_surf}, \ref{table:hyperparameters_rune}, \ref{table:hyperparameters_mrn}, and \ref{table:hyperparameters_rime}, respectively. The total amount of feedback and feedback amount per session in each experimental condition are detailed in Table \ref{table:hyperparameters_condition}. The reward model comprises an ensemble of three MLPs. Each MLP consists of three layers with 256 hidden units, and the output of the reward model is constrained using the tanh activation function.

\begin{table}[H]
\small
\vspace{-10pt}
\caption{Source codes of baselines.}
\centering
\begin{tabular}{ll}
\toprule
\textbf{Algorithm} & \textbf{Url} \\
\midrule
SAC, PEBBLE & \url{https://github.com/rll-research/BPref} \\
SURF & \url{https://github.com/alinlab/SURF} \\
RUNE & \url{https://github.com/rll-research/rune} \\
MRN & \url{https://github.com/RyanLiu112/MRN} \\
\bottomrule
\end{tabular}
\label{table:source_code}
\vspace{-10pt}
\end{table}

\begin{table}[H]
\small
\vspace{-10pt}
\caption{Hyperparameters of SAC.}
\centering
\begin{tabular}{ll|ll}
\toprule
\textbf{Hyperparameter} & \textbf{Value} &
\textbf{Hyperparameter} & \textbf{Value} \\
\midrule
Number of layers & $2$ (DMControl), $3$ (Meta-world) &
Initial temperature & $0.1$ \\
Hidden units per each layer & $1024$ (DMControl), $256$ (Meta-world) &
Optimizer & Adam \\ 
Learning rate  & $0.0005$ (Walker), $0.001$ (Cheetah) & 
Critic target update freq & $2$ \\
 & $0.0001$ (Quadruped), $0.0003$ (Meta-world)& 
Critic EMA $\tau$ & $0.005$ \\ 
Batch Size & $1024$ (DMControl), $512$ (Meta-world) &
$(\beta_1,\beta_2)$  & $(0.9,0.999)$ \\
Steps of unsupervised pre-training & $9000$ & 
Discount $\gamma$ & $0.99$ \\
\bottomrule
\end{tabular}
\label{table:hyperparameters_sac}
\vspace{-10pt}
\end{table}

\begin{table}[H]
\vspace{-10pt}
\small
\caption{Hyperparameters of PEBBLE.}
\centering
\begin{tabular}{ll}
\toprule
\textbf{Hyperparameter} & \textbf{Value} \\
\midrule
Segment Length & $50$ \\
Learning rate & $0.0005$ (Walker, Cheetah), $0.0001$ (Quadruped), $0.0003$ (Meta-world) \\
Frequency of feedback & $20000$ (Walker, Cheetah), $30000$ (Quadruped), $5000$ (Meta-world) \\
Number of reward functions & $3$ \\
\bottomrule
\end{tabular}
\label{table:hyperparameters_pebble}
\vspace{-10pt}
\end{table}

\begin{table}[h!]
\vspace{-10pt}
\small
\caption{Hyperparameters of SURF.}
\centering
\begin{tabular}{ll}
\toprule
\textbf{Hyperparameter} & \textbf{Value} \\
\midrule
Unlabeled batch ratio $\mu$ & $4$ \\
Threshold $\tau$ & $0.999$ (Cheetah, Sweep Into), $0.99$ (others) \\
Loss weight $\lambda$ & 1 \\
Min/Max length of cropped segment & $45/55$ \\
Segment length before cropping & $60$ \\
\bottomrule
\end{tabular}
\label{table:hyperparameters_surf}
\vspace{-10pt}
\end{table}

\begin{table}[h!]
\small
\caption{Hyperparameters of RUNE.}
\begin{center}
\begin{tabular}{ll}
\toprule
\textbf{Hyperparameter} & \textbf{Value} \\
\midrule
Initial weight of intrinsic reward $\beta_0$ & $0.05$ \\
Decay rate $\rho$* & $0.001$ (Walker), $0.0001$ (Cheetah, Quadruped, Button Press) \\ 
 & $0.00001$ (Sweep Into, Hammer) \\
\bottomrule
\end{tabular}
\label{table:hyperparameters_rune}
\begin{minipage}{0.9\columnwidth}
*: Following the instruction of~\citet{liang2021reward}, we carefully tune the hyperparameter $\rho$ in a range of $\rho\in\{0.001,0.0001,0.00001\}$ and report the best value for each environment.
\end{minipage}
\end{center}
\vspace{-10pt}
\end{table}

\begin{table}[h!]
\vspace{-10pt}
\small
\caption{Hyperparameters of MRN.}
\centering
\begin{tabular}{ll}
\toprule
\textbf{Hyperparameter} & \textbf{Value} \\
\midrule
Bi-level updating frequency $N$ & $5000$ (Cheetah, Hammer, Button Press), $1000$ (Walker) \\
 & $3000$ (Quadruped), $10000$ (Sweep Into) \\
\bottomrule
\end{tabular}
\label{table:hyperparameters_mrn}
\vspace{-10pt}
\end{table}

\begin{table}[h!]
\vspace{-10pt}
\small
\caption{Hyperparameters of RIME.}
\centering
\begin{tabular}{ll}
\toprule
\textbf{Hyperparameter} & \textbf{Value} \\
\midrule
Coefficient $\alpha$ in the lower bound $\tau_\text{lower}$ & $0.5$ \\
Minimum weight $\beta_{\min}$ & 1 \\
Maximum weight $\beta_{\max}$ & 3 \\
Decay rate $k$ & 1/30 (DMControl), 1/300 (Meta-world) \\
Upper bound $\tau_\text{upper}$ & $3\ln(10)$ \\
$\delta$ in Equation (\ref{eq:norm intrinsic reward}) & $1 \times 10^{-8}$ \\
Steps of unsupervised pre-training & $2000$ (Cheetah), $9000$ (others) \\
\bottomrule
\end{tabular}
\label{table:hyperparameters_rime}
\vspace{-10pt}
\end{table}

\subsection{Details of scripted teachers}
\label{app:scripted teachers}
For a pair of trajectory segments $(\sigma^0, \sigma^1)$ with length $H$, where $\sigma^i=\{(\state_1^i, \action^i_1),\dots,(\state_H^i, \action_H^i)\},(i=0,1)$. The ground-truth reward function from the environment is $r(\state,\action)$. Then scripted teachers are defined as follows~\citep{lee2021b}:

\textbf{Oracle:} Oracle teacher provides ground-truth preferences. It prefers the segment with larger cumulative ground-truth rewards. For example, if $\sum_{i=1}^Hr(\state^0_i,\action^0_i)>\sum_{i=1}^Hr(\state^1_i,\action^1_i)$, then it returns the label as $(1,0)$.

\textbf{Equal:} Equal teacher gives equal preference $(0.5,0.5)$ if the difference between the cumulative rewards of two segments is small. In particular, if $|\sum_{t=1}^Hr(\state_t^1,\action_t^1)-\sum_{t=1}^Hr(\state_t^0,\action_t^0)|<\delta$, then it return the label $(0.5,0.5)$. $\delta=\frac{H}{T}R_\text{avg}\epsilon_\text{adapt}$, where T is the episode length, $R_\text{avg}$ is the average returns of current policy, $\epsilon_\text{adapt}$ is a hyperparameter and is set to 0.1 in experiments, following the setting in \citet{lee2021b}.

\textbf{Skip:} Skip teacher skips the query if the cumulative rewards of segments are small. In particular, if $\max_{i\in\{0,1\}}\sum_{t=1}^Hr(\state_t^i,\action_t^i)<\delta$, then it will skip this query. 

\textbf{Myopic:} Myopic teacher focuses more on the behavior at the end of segments. It prefers the segment with larger discounted cumulative ground-truth rewards. For example, if $\sum_{i=1}^H\gamma^{H-t}r(\state^0_i,\action^0_i)>\sum_{i=1}^H\gamma^{H-t}r(\state^1_i,\action^1_i)$, then it returns the label $(1,0)$.

\textbf{Mistake:} Mistake teacher flips the ground-truth preference labels with a probability of $\epsilon$.

\subsection{Details of experiments with human teachers}
\label{app:annotation protocol}
Human experiments adopt an online paradigm consistent with PEBBLE's pipeline, where agent training alternates with reward model training. When it is the timestep to collect preferences (post-agent training and pre-reward training), the training program generates segment pairs, saving each segment in GIF format. The program then pauses, awaiting the input of human preferences. At this juncture, human labelers engage, reading the paired segments in GIF format and labeling their preferences. Subsequently, this annotated data is fed into the training program for reward model training.

To ensure a fair comparison between RIME and PEBBLE, for one human labeler, we start the training programs for RIME and PEBBLE simultaneously. When both training programs are waiting for inputting preferences, we collate all GIF pairs from both RIME and PEBBLE and shuffle the order. Then the human labeler starts working. Therefore, from the labeler's perspective, he/she does not know which algorithm the currently labeled segment pair comes from and just focuses on labeling according to his/her preference. The labeled data is then automatically directed to the respective training programs. We conduct this experiment parallelly on each of the five labelers, thus preferences from different users do not get mixed.

In the Hopper task, the labeling process itself requires approximately 5 minutes, not accounting for waiting time. However, due to the online annotation, labelers experience downtime while waiting for agent training and GIF pair generation. Consequently, considering all factors, the total time commitment for labeling amounts to about 20 minutes per annotator.

\section{Additional Experiment Results} \label{app:addition_result}
\textbf{Effects of hyperparameters of RIME.} We investigate how the hyperparameters of RIME affect the performance under noisy feedback settings. In Figure \ref{app:ablation_hyperparameter} we plot the learning curves of RIME with a different set of hyperparameters: (a) coefficient $\alpha$ in the lower bound $\tau_\text{lower}$: $\alpha\in\{0.3,0.4,0.5,0.6\}$, (b) maximum value of $\beta_t$: $\beta_{\max}\in \{1,3,5,10\}$, (c) decay rate $k\in \{0.01, 1/30, 0.06, 0.1\}$, and (d) upper bound of KL divergence $\tau_\text{upper}\in \{2\ln(10),3\ln(10),4\ln(10)\}$. 

For the coefficient $\alpha$ in the lower bound $\tau_\text{lower}$, we find the theoretical value $\alpha=0.5$ performs the best. The maximum weight $\beta_{\max}$ and decay rate $k$ control the weight of uncertainty term in the lower bound $\tau_\text{lower}$: $\beta_t=\max(\beta_{\min}, \beta_{\max}-kt)$. The combination of $\beta_{\max}=3$ and $k=1/30$ also performs optimally. Due to the quite limited feedback amount (1000 feedback) and training epochs for the reward model (around $150\sim200$ epochs on Walker-walk), RIME is sensitive to the weight of uncertainty term. If one tries to increase $\beta_{\max}$ to add more tolerance for trustworthy samples in early-stage, we recommend increasing the decay rate $k$ simultaneously so that the value of $\beta_t$ decays to its minimum within about $1/3$ to $1/2$ of the total epochs. For the upper bound $\tau_\text{upper}$, although we use $3\ln(10)$ for balanced performance on DMControl tasks, individually fine-tuning $\tau_\text{upper}$ can further improve the performance of RIME on the corresponding task, such as using $\tau_\text{upper}=4\ln(10)$ for Walker-walk.

\begin{figure*}[ht]
\centering
\captionsetup[subfloat]{captionskip=-8pt}
\includegraphics[width=\linewidth]{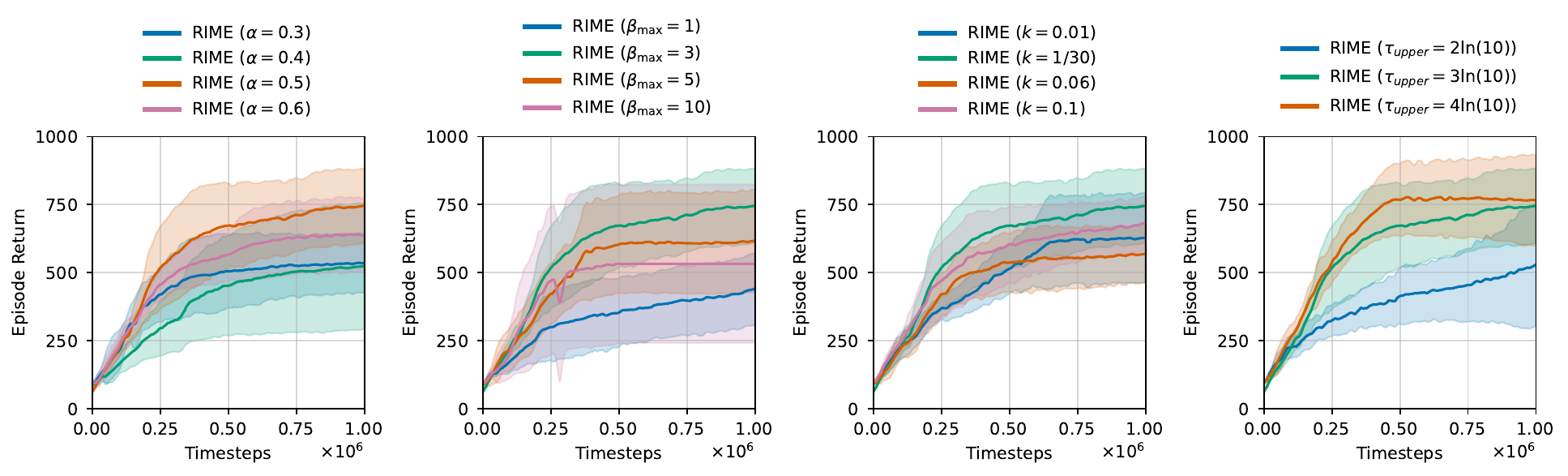}\\
\subfloat[Coefficient $\alpha$]{\hspace{0.25\linewidth}}
\subfloat[Maximum weight $\beta_{\max}$]{\hspace{0.25\linewidth}}
\subfloat[Decay rate $k$]{\hspace{0.25\linewidth}}
\subfloat[Upper bound $\tau_\text{upper}$]{\hspace{0.23\linewidth}}
\caption{Hyperparameter analysis on Walker-walk using 1000 feedback with $\epsilon=0.3$. The results show the mean and standard deviation averaged over five runs.}
\label{app:ablation_hyperparameter}
\end{figure*}

\begin{wrapfigure}{r}{0.5\textwidth}
    \centering
    \includegraphics[width=0.35\textwidth]{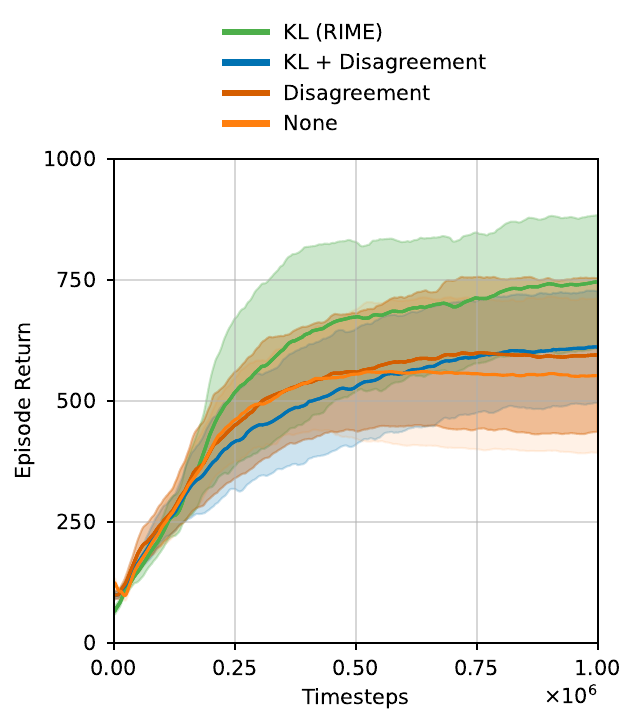}
    \caption{
    Uncertainty term analysis on Walker-walk using 1000 feedback with error rate $\epsilon=0.3$, across ten runs.
    }\label{app:ablation_uncertainty}
\end{wrapfigure}

\textbf{Effects of different uncertainty terms in the lower bound.} In RIME, we use an auxiliary uncertainty term $\tau_\text{unc}$ in the lower bound $\tau_\text{lower}$ to accommodate tolerance during the early training stages and in cases of distribution shifts. The standard deviation of the KL divergence, denoted as the KL metric in this section, is employed to discern these cases.  Here, we compare this with two other metrics: the disagreement metric and a combination of both, termed as KL + disagreement. The disagreement metric uses the standard deviation of $P_\psi[\sigma^0\succ\sigma^1]$ across the ensemble of reward models (denoted as $s_P$) to discern cases of distribution shifts: $\tau_\text{unc}=\gamma_t\cdot s_{P}$. Our intuition is that the predictions of the model for OOD data typically vary greatly. Notably, this metric induces sample-level, rather than buffer-level, thresholds, potentially offering more nuanced threshold control. The combined metric, KL + disagreement, integrates both as $\tau_\text{unc}=\beta_t\cdot s_\text{KL} + \gamma_t\cdot s_{P}$.

For reference, we also include a group devoid of any uncertainty term, termed the ``None" group. As shown in Figure \ref{app:ablation_uncertainty}, the KL metric outperforms the other approaches on Walker-walk with an error rate of $\epsilon=0.3$. This might be because the disagreement metric fluctuates violently at every query time, often leading to excessive trust in new data, which hinders the stabilization of the lower bound.

\textbf{Comparison with fixed lower bound.} We conducted experiments to compare fixed lower bound with dynamic lower bound (RIME) and presented the results in Table \ref{table:fixed lower bound}. Notice that the convergence value of $\tau_\text{lower}$ in RIME are 0.972 and 0.711 for Walker and Button-press, respectively. Table \ref{table:fixed lower bound} indicates that the dynamic lower bound employed by RIME outperforms the fixed lower bound method substantially. This superiority stems from RIME's ability to adapt its lower bound value according to the situation during training. By contrast, employing a fixed lower bound might exacerbate incorrect selections either in the early or late phase of training, depending on whether the lower bound is small or large respectively. The issue of incorrect selection will in turn lead to cumulative errors and compromise the effectiveness of selection-based robust training methods.

\begin{table}[ht]
    \small
    \begin{center}
    \caption{Ablation study of the lower bound on Walker and Button-press over 5 runs.}
\begin{tabular}{cccc}
\specialrule{.1em}{.05em}{.05em}
Method & Value of $\tau_\text{lower}$ & Walker ($\epsilon=0.3$) & Button-press ($\epsilon=0.3$) \\  \hline
 & 0.5 & 98 \stdv{92} & 36.8 \stdv{45.4} \\
Fixed & 0.7 & 179 \stdv{165} & 64.8 \stdv{26.4} \\
lower & 0.9 & 256 \stdv{88} & 58.0 \stdv{41.9} \\
bound & 1.1 & 468 \stdv{186} & 49.0 \stdv{37.6} \\
 & 1.3 & 325 \stdv{79} & 56.6 \stdv{34.0} \\
\rowcolor{mygray} RIME & dynamic & \textbf{741} \stdv{139} & \textbf{80.0} \stdv{27.7} \\
\specialrule{.1em}{.05em}{.05em}
\end{tabular}
\label{table:fixed lower bound}
\end{center}
\end{table}

\textbf{Explore the error rate limits of RIME.} To understand the boundaries of RIME, we conduct supplementary experiments in both the Walker-walk and Cheetah-run environments, varying the error rates $\epsilon$ within the range $\{35\%, 40\%, 45\%, 50\%\}$, across 5 runs. The results, presented in Table \ref{table:error rate limits}, reveal that RIME failed with 45\% and 40\% noisy data in the Walker-walk and Cheetah-run environments, respectively. Interestingly, even with the feedback amount increased to ten times the minimum, it has not change the result of failure. The performance gains from increasing the amount of feedback are limited under high error rates ($\epsilon\geq 0.4$).

\begin{table}[t]
    \small
    \begin{center}
    \caption{Performance of RIME with different noise levels over 5 runs.}
\begin{tabular}{l|l|cccc}
\specialrule{.1em}{.05em}{.05em}
\multirow{2}{*}{Environment} & Feedback & \multicolumn{4}{c}{Error rate} \\ \cline{3-6}
 & volume & 35\% & 40\% & 45\% & 50\% \\  \hline
\multirow{3}{*}{Walker} & 1000 & 646.58 & 497.63 & 164.31 & / \\
 & 5000 & / & 500.64 & 210.86 & 145.05 \\
 & 10000 & / & 554.69 & 312.61 & 217.34 \\ \hline
\multirow{3}{*}{Cheetah} & 1000 & 403.72 & 246.51 & / & / \\
 & 5000 & 440.4 & 347.4 & / & / \\
 & 10000 & 503.57 & 393.0 & / & / \\
\specialrule{.1em}{.05em}{.05em}
\end{tabular}
\label{table:error rate limits}
\end{center}
\end{table}

\textbf{Trade-off between sample efficiency and robustness.} We repeat the same analysis that is detailed in \ref{subsec:ablation study} for PEBBLE and present the results in Table \ref{table:trade off error rate for pebble} and Table \ref{table:trade off amount for pebble} below. Similarly, we observe a gradual decline in PEBBLE's performance with rising error rates. Doubling the amount of feedback engenders a marginal enhancement to PEBBLE; however, this improvement is negligible on Quadruped and Button-press when $\epsilon\geq0.2$. Intriguingly, by comparing the 4-th column ($N$) of Table \ref{table:trade off amount} with the 5-th column (2$N$) of Table \ref{table:trade off amount for pebble}, we find that RIME even outperforms PEBBLE with only half the number of feedbacks in most cases.

\begin{table}[b]
    \small
    \begin{center}
    \caption{Results of PEBBLE as the error rate increases with constant amount of feedback, across 5 runs.}
    \begin{tabular}{l|c|ccc}
    \specialrule{.1em}{.05em}{.05em}
        \multirow{2}{*}{Environment} & Feedback & \multicolumn{3}{c}{Error rate} \\ \cline{3-5}
         & volume & 0.1 & 0.2 & 0.3 \\
        \hline
        Walker & 500 & \textbf{749} \stdv{123} & 490 \stdv{252} & 230 \stdv{172} \\
        Quadruped & 2000 & \textbf{292} \stdv{166} & 171 \stdv{26} & 125 \stdv{38} \\
        Button-press & 10000 & \textbf{93.1} \stdv{10.6} & 21.6 \stdv{15.3} & 17.8 \stdv{25.2} \\
        Hammer & 20000 & \textbf{36.6} \stdv{41.4} & 20.0 \stdv{17.8} & 15.7 \stdv{12.0} \\
        \specialrule{.1em}{.05em}{.05em}
    \end{tabular}
    \label{table:trade off error rate for pebble}
    \end{center}
\end{table}

\begin{table}[b!]
    \small
    \begin{center}
    \caption{Results of PEBBLE as the feedback volume increases with constant error rate, across 5 runs. $N$ refers to the minimal feedback volume for each environment shown in Table \ref{table:hyperparameters_condition}.}
    \begin{tabular}{l|c|c|cc}
    \specialrule{.1em}{.05em}{.05em}
        \multirow{2}{*}{Domain} & \multirow{2}{*}{Environment} & \multirow{2}{*}{Error rate} & \multicolumn{2}{c}{Feedback volume} \\ \cline{4-5}
         & & & $N$ & 2$N$ \\
        \hline
        \multirow{4}{*}{DMControl} & \multirow{2}{*}{Walker} & 0.2 & 490 \stdv{252} & \textbf{656} \stdv{158} \\
         & & 0.3 & 230 \stdv{172} & \textbf{431} \stdv{157} \\ \cline{2-5}
         & \multirow{2}{*}{Quadruped} & 0.2 & 171 \stdv{26} & \textbf{212} \stdv{47} \\
         & & 0.3 & 125 \stdv{38} & \textbf{165} \stdv{35} \\ \hline
        \multirow{2}{*}{Meta-world} & \multirow{2}{*}{Button-press} & 0.2 & 21.6 \stdv{15.3} & \textbf{26.2} \stdv{35.7} \\
         & & 0.3 & 17.8 \stdv{25.2} & \textbf{22.0} \stdv{13.8} \\
        \specialrule{.1em}{.05em}{.05em}
    \end{tabular}
    \label{table:trade off amount for pebble}
    \end{center}
\end{table}

\end{document}